\newcommand{\ip}[2]{\left\langle #1, #2 \right\rangle}
\renewcommand{\Pr}[1]{\operatorname{Pr}\left[{#1}\right]}
\newcommand{\norm}[1]{\left\lVert{#1}\right\rVert}
\newcommand{\frob}[1]{\left\lVert{#1}\right\rVert_F}
\newcommand{\abs}[1]{\left\lvert{#1}\right\rvert}
\newcommand{\R}{\mathbb{R}}
\newcommand{\Rnk}{\mathbb{R}^{n\times k}}
\newcommand{\Rn}{\mathbb{R}^{n}}
\newcommand{\Rk}{\mathbb{R}^{k}}
\newcommand{\Rm}{\mathbb{R}^{m}}
\newcommand{\Rnn}{\mathbb{R}^{n\times n}}
\newcommand{\N}{\mathcal{N}}
\newcommand{\trans}[1]{ {#1}^{\!\top}}
\newcommand{\trace}[1]{\mathrm{Tr}\left(#1\right)}
\newcommand{\Sym}[1]{\mathcal{S}^{#1}}
\newcommand{\Snn}{\Sym{n\times n}}
\newcommand{\nulll}{\operatorname{null}}
\newtheorem{theorem}{Theorem}
\newtheorem{lemma}{Lemma}
\newtheorem*{lemma*}{Lemma}
\newtheorem{corollary}{Corollary}
\newtheorem{definition}{Definition}
\newtheorem{proposition}{Proposition}
\newcommand{\defeq}{\triangleq}
\newcommand{\eps}{\epsilon}
\newcommand{\calA}{\mathcal{A}}
\newcommand{\calC}{\mathcal{C}}
\newcommand{\calN}{\mathcal{N}}
\newcommand{\calM}{\mathcal{M}}
\newcommand{\rank}{\operatorname{rank}}
\newcommand{\im}{\operatorname{im}}
\DeclareMathOperator*{\minimize}{\mathrm{minimize}}
\newcommand{\Fm}{F_{\mu}}
\newcommand{\sG}{\sigma_G}
\newcommand{\tG}{G}
\newcommand{\calS}{\mathcal{S}}
\newcommand{\calE}{\mathcal{E}}
\newcommand{\vr}{\mathbf{r}}
\newcommand{\vb}{\mathbf{b}}
\newcommand{\const}{c_0}
\newcommand{\bz}{b_0}
\newtheorem{assumption}[theorem]{Assumption}
\title{Smoothed analysis for low-rank solutions to semidefinite programs in quadratic penalty form} 
\author{Srinadh Bhojanapalli\thanks{TTI Chicago, email: srinadh@ttic.edu } \and Nicolas Boumal\thanks{Princeton University, email: nboumal@math.princeton.edu} \and Prateek Jain\thanks{Microsoft Research, email: prajain@microsoft.com} \and Praneeth Netrapalli \thanks{Microsoft Research, email: praneeth@microsoft.com}}
\date{}
\begin{document}

\maketitle

\begin{abstract}
Semidefinite programs (SDP) are important in learning and combinatorial optimization with numerous applications. In pursuit of low-rank solutions and low complexity algorithms, we consider the Burer--Monteiro factorization approach for solving SDPs. We show that all approximate local optima are global optima for the penalty formulation of appropriately rank-constrained SDPs as long as the number of constraints scales sub-quadratically with the desired rank of the optimal solution. Our result is based on a simple penalty function formulation of the rank-constrained SDP along with a smoothed analysis to avoid worst-case cost matrices. We particularize our results to two applications, namely, Max-Cut and matrix completion.

\end{abstract}


\section{Introduction}
\label{sec:intro}

Semidefinite programs (SDP) are an important class of optimization problems \citep{vandenberghe1996semidefinite}, and are critical to several learning-related tasks, e.g., clustering \citep{shi2000normalized,abbe2017community}, matrix completion and regression \citep{recht2010guaranteed,candes2009exact}, kernel learning \citep{lanckriet2004learning}, sum-of-squares relaxations \citep{barak2015dictionary}, etc. 

However, solving SDPs in practice is a challenging task. Consider the following  canonical SDP: 
\begin{align}
\underset{X \in \R^{n \times n}}{\minimize} \quad & \ip{C}{X} \nonumber \\
\text{subject to} \quad & \ip{A_i}{X} =b_i,\quad i =1,\cdots, m, \text{ and } X \succeq 0,
\label{eq:sdp}
\end{align}
where $C, A_1, \ldots, A_m \in \R^{n\times n}$ are symmetric matrices, $\ip{A}{B} = \trace{A^T B}$, and $X$ is positive semidefinite. Such problems are convex and can be solved in polynomial time using classical iterative algorithms such as ellipsoid and interior-point methods~\citep{nesterov1994interior}.  However, these algorithms have super-linear complexity (in input size) and tend to scale poorly in practice, and are not well suited for typical learning tasks where both $m$ and $n$ can be fairly large. The two key challenges for these algorithms are: (a) a search space of high dimension on the order of $n^2$; and (b) the need to maintain positive semidefiniteness of the variable matrix $X$ throughout the iterations. 

%
%

In response to these challenges, \citet{burer2003nonlinear, burer2005local} suggested solving~\eqref{eq:sdp} by constraining the search space to matrices of rank at most $k$, using a parameterization of the form $X = UU^T$ where $U \in \Rnk$. This reduces the number of variables from $O(n^2)$ to $O(nk)$, and mechanically enforces positive semidefiniteness:
\begin{align}
	\underset{U \in \Rnk}{\minimize} \quad & \ip{C}{UU^T} \nonumber \\
	\text{subject to} \quad & \ip{A_i}{UU^T} = b_i,\quad i = 1,\cdots, m.
	\label{eq:factored}
\end{align}
This is equivalent to~\eqref{eq:sdp} with the additional constraint $\rank(X) \leq k$. This rank constraint is fairly natural, as several SDPs of interest are themselves relaxations of rank-constrained problems. Moreover, \citet{barvinok1995problems, pataki1998rank} showed that for every compact SDP with a solution, there exists a rank $\Omega(\sqrt{m})$ solution that is also globally optimal. While this ensures that the global optimum of the factored SDP problem (with $k=\Omega(\sqrt{m})$) is a global optimum of the original SDP problem, it is not immediately clear how to solve the factorized problem. 

In fact, the factorized problem is a non-convex quadratically constrained quadratic program which in general can be NP-hard. The challenge in solving the problem arises due to the non-convexity as well as due to constraints. In this work, we propose a simple penalty method that gets rid of the constraints and replaces them via a quadratic penalty in the objective function. The penalty formulation allows us to study first-order and second-order stationary points of the problem and lends itself to efficient algorithms, as we detail in this paper.

The proposed penalty formulation is given by: 
\begin{align}
	\underset{X \succeq 0}{\minimize} ~\quad \Fm(X) =  \ip{C}{X} + \mu \sum_{i=1}^m \left(\ip{A_i}{X} -b_i\right)^2,
	\label{eq:penalty_sdp}
\end{align}
where $\mu$ is generally a large positive constant. Notice that this is a convex problem. Intuitively, for increasingly large $\mu$, solutions of~\eqref{eq:penalty_sdp} converge to solutions of~\eqref{eq:sdp}.

Combining the formulation with the Burer--Monteiro factorization we get:
\begin{align}
	\underset{U \in \Rnk}{\minimize} \quad  L_{\mu}(U) =  \ip{C}{UU^T} + \mu \sum_{i=1}^m (\ip{A_i}{UU^T} -b_i)^2.
	\label{eq:penalty_factored}
\end{align}
The cost function $L_{\mu}$ is non-convex, and generic optimization algorithms can only guarantee computation of an approximate second-order stationary points (SOSP) \citep{cartis2012complexity,ge2015escaping}. That is, such algorithms converge to a point $U$ where the gradient of $L_\mu$ is small and the Hessian of $L_\mu$ is almost positive semidefinite. Such second-order stationary points need not be close to optimal in general.

We construct an explicit SDP where a suboptimal SOSP exists even for $k$ as large as $n-1$. However, we show that there are only measure zero of such bad SDPs. Hence, we show that if the cost matrix has a small amount of randomness then {\em any} SOSP of $L_\mu$ is a global optimum, as long as at least one SOSP exists. That is, for almost all cost matrices $C$, an SOSP of \eqref{eq:penalty_factored} corresponds to a global optimum. We would like to stress here that for certain non-compact SDPs existence of an SOSP itself is not guaranteed. However, as shown in Section~\ref{sec:applications}, SOSPs exists for several important SDPs. 

We next address the question of approximate optimality for approximate SOSPs, as optimization algorithms can only recover approximate SOSPs in polynomial time. Since there is a measure zero set of SDPs with bad SOSPs, there can be a non-zero (but small) measure set of SDPs with bad approximate SOSPs. We use {\em smoothed analysis} to avoid these bad SDPs, by perturbing the objective matrix. We show that for $k=\tilde \Omega(\sqrt{m})$, any approximate SOSP of  $L_{\mu}$  with a perturbed objective and bounded residues is approximately optimal to the penalty objective~\eqref{eq:penalty_sdp}. We further discuss settings under which all SOSPs of the penalty objective have bounded solutions (residues).

Since our results are about approximate SOSPs, and not any particular algorithm, it is an interesting question to see if we can adapt classical techniques such as interior point or cutting plane to optimize over the low dimensional, factored space. We provide results for gradient descent convergence in Section \ref{sec:gd}.

Finally, even though finding the smallest rank solution satisfying a set of linear equations is NP hard \citep{natarajan1995sparse}, our result shows how increasing the number of parameters (rank) makes the optimization of this non-convex problem easier. While the extreme case of rank $n$ makes the constraint trivial, our results show optimality for a non-trivial rank ($\tilde{\Omega}(\sqrt{m})$), and it is an interesting question to understand this trade-off in more detail.

\subsection{Main results}
The main contributions of this work are: 
\begin{itemize}
	\item We propose a simple penalty version of the factored SDP~\eqref{eq:factored} and show that, for almost all cost matrices $C$, any exact SOSP of the rank-constrained formulation~\eqref{eq:penalty_factored} is a global optimum for rank ${\Omega}(\sqrt{m})$---see Corollary~\ref{cor:exactpenaltyfactorized}. This result removes the smooth manifold requirement of~\citep{boumal2016non}, though it applies to~\eqref{eq:penalty_sdp}, not~\eqref{eq:sdp}.
	\item We show that there indeed exists a compact, feasible SDP with a worst-case $C$ for which the penalized, factorized problem admits a suboptimal SOSP (see Theorem~\ref{thm:bad_sdp}), even for rank almost as big as the dimension.
	\item We show that by perturbing the objective function slightly and by performing a smoothed analysis on the resulting problem, we can guarantee every approximate SOSP of the perturbed problem is an approximate global optimum of the perturbed and penalized SDP. Hence, we can use standard techniques~\citep{cartis2012complexity,ge2015escaping} to find approximate SOSPs and guarantee global optimality---see Theorem~\ref{thm:optimal_approx_compact}.
\end{itemize}

In summary, we show that for a class of SDPs with bounded solutions, we can find a low-rank solution that is close to the global optimum of the penalty objective. We believe that the factorization technique can be leveraged to design faster SDP solvers, and any looseness in the current bounds is an artifact of our proof, which hopefully can be tightened in future works. 

\subsection{Prior work}

Fast solvers for SDPs have garnered interest in the optimization and in the theoretical computer science communities for a long time. Most of the existing results for SDP solvers can be categorized into direct (convex) methods and factorization methods. \\

\noindent \textbf{Convex methods:}
Classical techniques such as interior point methods~\citep{ nesterov1989self, nesterov1988polynomial, alizadeh1995interior} and cutting plane methods~\citep{anstreicher2000volumetric,krishnan2003properties} enjoy geometric convergence, but their computational complexity per iteration is high. As a result, it is hard to scale these methods to SDPs with a large number of variables.

With the goal of speeding up the computation, many works have considered: i) a specific and important class of SDPs, namely, SDPs with a trace constraint ($\trace{X}=1$), and ii) methods with sub-linear convergence. For these SDPs, \citet{arora2005fast} proposed a multiplicative weights method which provides faster techniques for some graph problems, with running time depending on $O(\frac{1}{\eps^2})$ and the \textit{width} of the problem.  \citet{hazan2008sparse} proposed a Frank--Wolfe-type algorithm with a complexity of $\tilde{O}( \frac{Z}{\eps^{3.5}})$ where $Z$ is the sparsity of $C$ and the $A_i$'s.  \citet{garber2016sublinear, garber2016faster}  proposed faster methods that either remove the dependence on $Z$ (sub-linear time), or improve the dependence on $\eps$. While these methods improve the per iteration complexity, they still need significant memory as the rank of solutions for these methods is not bounded, and scales at least at the rate of $O(\frac{1}{\eps})$.  An exception to this is the work by \citet{yurtsever2017sketchy}, which uses sketching techniques in combination with conditional gradient method to maintain a low rank representation. However this method is guaranteed to find a low rank optimum only if the conditional gradient method converges to a low rank solution. \\

\noindent \textbf{Factorization methods:}
\citet{burer2003nonlinear, burer2005local} proposed a different approach to speed up computations, namely by searching for solutions with smaller rank.  Even though all feasible compact SDPs have at least one solution of rank $O(\sqrt{m})$ \citep{barvinok1995problems, pataki1998rank}, it is not an easy task to optimize directly on the rank-constrained space because of non-convexity. However,  \citet{burer2003nonlinear, burer2005local} showed that any \emph{rank-deficient local minimum} is optimal for the SDP; \citet{journee2010low} extended this result to any rank-deficient SOSP under restrictive conditions on the SDP. However, these results cannot guarantee that SOSPs are rank deficient, or at least that rank-deficient SOSPs can be computed efficiently (or even exist). \citet{boumal2016non} address this issue by showing that for a particular class of SDPs satisfying some regularity conditions, and for almost all cost matrices $C$, any SOSP of the rank-constrained problem with $k = \Omega(\sqrt{m})$ is a global optimum. Later, \citet{pmlr-v65-mei17a} showed that for SDPs with elliptic constraints (similar to the Max-Cut SDP), any rank-$k$ SOSP gives a $(1-\frac{1}{k-1})$ approximation to the optimum value. Both these results are specific to particular classes of SDPs and do not extend to general problems.
 
In a related setup, \citet{keshavan2010matrix, jain2013low} have showed that rank-constrained matrix completion problems can be solved using smart initialization strategies followed by local search methods. Following this, many works have identified interesting statistical conditions under which certain rank-constrained matrix problems have no spurious local minima \citep{ sun2016geometric, bandeira2016low, ge2016matrix, bhojanapalli2016global, park2017non, ge2017no, zhu2017global, ge2017optimization}.  These results are again for specific problems and do not extend to general SDPs.

In contrast, our result holds for a large class of SDPs in penalty form, without strong assumptions on the constraint matrices $A_i$ and for a large class of cost matrices $C$. We avoid degenerate SDPs with spurious local minima by perturbing the problem and then using a smoothed analysis, which is one of the main contribution of the work.


\subsection*{Notation}

For a smooth function $f(X)$, we refer to first-order stationary points $X$ as FOSPs. Such points satisfy $\nabla f(X) = 0$ (zero gradient). We refer to second-order stationary points at SOSPs. Such points are FOSPs and furthermore satisfy $\nabla^2 f(X) \succeq 0$, i.e., the Hessian is positive semidefinite. The set of symmetric matrices of size $n$ is $\Snn$. $\sigma_i()$  and $\lambda_i()$ denote the $i$th singular- and eigenvalues respectively, in decreasing order.

\section{Exact second-order points typically are optimal}
\label{sec:exact}
In this section, we study the second-order stationary points of our penalty formulation \eqref{eq:penalty_factored} and show that for ``typical'' cost matrices $C$, {\em exact} SOSPs are optimal for \eqref{eq:penalty_factored} as long as $k={\Omega}(\sqrt{m})$. 

Our result is based on 	a simple but powerful argument that has appeared in various forms before, notably in \citep{burer2005local}. The argument claims that any {\em rank-deficient} local optimum of \eqref{eq:penalty_factored} (which is really a parameterized version of~\eqref{eq:penalty_sdp} with a rank constraint) should map to a local optimum of \eqref{eq:penalty_sdp} as the constraint $\rank(X) \leq k$ is not active. Since~\eqref{eq:penalty_sdp} is convex, every local optimum is a global optimum, hence a rank-deficient local optimum of \eqref{eq:penalty_factored} maps to a global optimum of \eqref{eq:penalty_sdp}. Interestingly, the result holds even if $U$ is just an SOSP rather than a local optimum, something that is readily apparent from the proofs in~\citep{journee2010low}, albeit in a restricted setting.
\begin{lemma}\label{lem:global}
	Let $f(X)$ be a convex, twice continuously differentiable function of $X \in \Snn$. Consider the convex problem
	\begin{align}
		\underset{X \succeq 0}{\minimize}~ f(X).
		\label{eq:prob_fx}
	\end{align}
	Now consider the rank-constrained factorized version of the problem:
	\begin{align}
		\underset{U \in \R^{n \times k}}{\minimize} ~ g(U) = f(UU^T).
		\label{eq:prob_fx_U}
	\end{align}
	If $U$ is an SOSP of~\eqref{eq:prob_fx_U} with $\rank(U) < k$, then $U$ is a global minimum of~\eqref{eq:prob_fx_U} and $UU^T$ is a global minimum of~\eqref{eq:prob_fx}. (Notice that such a point may not exist in general.)
\end{lemma}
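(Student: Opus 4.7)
The plan is to translate the first- and second-order stationarity conditions for $g$ at $U$ into the standard KKT conditions for the convex problem~\eqref{eq:prob_fx} at $X=UU^T$. Since~\eqref{eq:prob_fx} is convex, satisfying the KKT conditions is both necessary and sufficient for global optimality; combined with $g(U)=f(UU^T)\ge \min_{X\succeq 0} f(X)$, this also yields global optimality of $U$ for~\eqref{eq:prob_fx_U}.

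First I would derive the derivatives of $g$ via the chain rule. Write $M=UU^T$ and for a perturbation $\Delta\in\Rnk$ expand
\begin{equation*}
g(U+\Delta) = f\!\left(M + U\trans{\Delta} + \Delta\trans{U} + \Delta\trans{\Delta}\right).
\end{equation*}
A Taylor expansion of $f$ around $M$, together with $\nabla f(M)\in\Snn$, gives $\nabla g(U) = 2\nabla f(M)\,U$ and
\begin{equation*}
\tfrac{1}{2}\nabla^2 g(U)[\Delta,\Delta] = \mip{\nabla f(M)}{\Delta\trans{\Delta}} + \tfrac{1}{2}\nabla^2 f(M)\!\left[U\trans{\Delta}+\Delta\trans{U},\,U\trans{\Delta}+\Delta\trans{U}\right].
\end{equation*}
The FOSP condition $\nabla g(U)=0$ therefore reads $\nabla f(M)\,U=0$, which immediately yields the complementarity relation $\mip{\nabla f(M)}{M}=\mip{\nabla f(M)U}{U}=0$.

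The key step is to use rank deficiency to establish that $\nabla f(M)\succeq 0$. Since $\rank(U)<k$, pick a unit vector $v\in\Rk$ with $Uv=0$ and, for an arbitrary $w\in\Rn$, consider the test direction $\Delta = w\trans{v}$. Then $U\trans{\Delta}+\Delta\trans{U}=0$ and $\Delta\trans{\Delta}=w\trans{w}$, so the Hessian formula collapses to $\tfrac{1}{2}\nabla^2 g(U)[\Delta,\Delta] = \mip{\nabla f(M)}{w\trans{w}} = \trans{w}\nabla f(M)w$. The SOSP condition $\nabla^2 g(U)\succeq 0$ therefore forces $\trans{w}\nabla f(M)w\ge 0$ for every $w\in\Rn$, i.e.\ $\nabla f(M)\succeq 0$.

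With $M\succeq 0$, $\nabla f(M)\succeq 0$, and $\mip{\nabla f(M)}{M}=0$ in hand, convexity of $f$ yields for every $X\succeq 0$
\begin{equation*}
f(X) \ge f(M) + \mip{\nabla f(M)}{X-M} = f(M) + \mip{\nabla f(M)}{X} \ge f(M),
\end{equation*}
where the last inequality uses the fact that $\nabla f(M)$ and $X$ are both positive semidefinite. This shows $M=UU^T$ is a global minimum of~\eqref{eq:prob_fx}, and then $g(U)=f(M)=\min_{X\succeq 0}f(X) \le \min_{V\in\Rnk} g(V)$, so $U$ is a global minimum of~\eqref{eq:prob_fx_U}. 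The main obstacle is identifying the right test direction for the Hessian: everything hinges on choosing $\Delta=w\trans{v}$ so that the mixed Hessian-of-$f$ term vanishes and only the curvature captured by $\nabla f(M)$ remains; the remainder of the argument is standard convex KKT.
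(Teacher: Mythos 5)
Your proof is correct and follows essentially the same route as the paper's: derive $\nabla g(U)=2\nabla f(UU^T)U$ and the Hessian quadratic form, use first-order stationarity to get complementarity $\nabla f(UU^T)U=0$, pick a unit kernel vector $v$ of $U$ and test with $\Delta = wv^T$ to kill the $\nabla^2 f$ term and conclude $\nabla f(UU^T)\succeq 0$, then invoke convexity. The only cosmetic difference is that you spell out the sufficiency of the KKT conditions via the first-order convexity inequality, whereas the paper simply cites them as necessary and sufficient for the conic-constrained convex problem.
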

See Appendix~\ref{app:exact} for a detailed proof. 

Thus, (column) rank-deficient SOSPs of~\eqref{eq:penalty_factored} are globally optimal and map to global optima of~\eqref{eq:penalty_sdp}. A direct corollary states that non-convexity is benign if $k = n+1$.
\begin{corollary}
	Given an SDP in penalized and factorized form~\eqref{eq:penalty_factored} with $k > n$, for almost any cost matrix $C$, deterministically, any SOSP $U$ is a global optimum, and $UU^T$ is a global optimum for~\eqref{eq:penalty_sdp}.
\end{corollary}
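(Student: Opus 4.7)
The plan is to apply Lemma~\ref{lem:global} as a black box; the content of the corollary is really just a pigeonhole observation on ranks. The key observation to make first is that when $k > n$, every matrix $U \in \R^{n \times k}$ satisfies $\rank(U) \le n < k$, so every candidate point is automatically rank-deficient in the sense required by Lemma~\ref{lem:global}. The rank-deficiency hypothesis of the lemma is therefore vacuously true for any SOSP of~\eqref{eq:penalty_factored}, regardless of what $C$ is.

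Second, I would verify that $F_{\mu}$ plays the role of the convex function $f$ in Lemma~\ref{lem:global}. The cost $F_\mu(X) = \ip{C}{X} + \mu \sum_{i=1}^m (\ip{A_i}{X} - b_i)^2$ is the sum of a linear functional in $X$ and $m$ squared linear functionals in $X$, so it is a convex quadratic on $\Snn$ and clearly twice continuously differentiable; convexity holds for every $C$, so the ``almost any'' qualifier is in fact superfluous and the conclusion holds deterministically for all $C$.

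Putting these together: with $g(U) = L_\mu(U) = F_\mu(UU^T)$ and $f = F_\mu$, Lemma~\ref{lem:global} applied to any SOSP $U$ of $L_\mu$ (necessarily with $\rank(U) < k$) yields that $U$ is a global minimum of $L_\mu$ and $UU^T$ is a global minimum of $F_\mu$, which is exactly the claim. I do not anticipate a real obstacle here; the only subtle point is to notice that the ``deterministic / almost every'' phrasing is consistent because the reduction to Lemma~\ref{lem:global} does not consume any genericity in $C$ at all when $k > n$.
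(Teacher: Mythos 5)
Your proposal is correct and is exactly the argument the paper has in mind: when $k > n$, every $U \in \R^{n\times k}$ has $\rank(U) \le n < k$ and is therefore automatically rank-deficient, so Lemma~\ref{lem:global} applies directly with $f = F_\mu$ (a convex quadratic on $\Snn$, twice continuously differentiable). Your observation that the ``almost any'' qualifier is superfluous here --- the conclusion holds for \emph{every} $C$ --- is also correct; the paper's phrasing is simply redundant (hence the accompanying word ``deterministically'').
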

Yet, the main goal is to make a statement for small $k$, so as to reduce the dimensionality of the search space. Unfortunately, in general, SOSPs of non-convex cost functions need not have rank less than $k$ for arbitrary $k$. 

However, the following lemma asserts that, for almost all cost matrices $C$, provided $k$ grows like $\sqrt{m}$, all FOSPs (a fortiori, all SOSPs) are rank deficient. Our proof is the same as that of~\citep[Lemma 9]{boumal2016non} but the main statement as well as the cost function and conditions on constraints are different. In particular, unlike~\cite{boumal2016non}, we do not require that the feasible set of~\eqref{eq:factored} form a smooth manifold. 
\begin{lemma}\label{lem:rank_deficient}
	Choose $k$ such that $\frac{k(k+1)}{2} > m$. For almost any $C \in \Snn$, any FOSP $U \in \Rnk$ of~\eqref{eq:penalty_factored} (if one exists) satisfies $\rank(U) < k$.
\end{lemma}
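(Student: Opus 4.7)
The plan is to turn the FOSP condition into a rank statement about a matrix in a specific low-dimensional subset of $\Snn$, and then argue by dimension count that only a measure-zero set of $C$'s can yield a full column-rank FOSP.

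First, I would compute the Euclidean gradient of $L_\mu$ at $U \in \Rnk$. Using symmetry of $C$ and the $A_i$'s,
\begin{align*}
\nabla L_\mu(U) = 2 C U + 4\mu \sum_{i=1}^m r_i(U)\, A_i U = 2\bigl(C + S(U)\bigr) U,
\end{align*}
where $r_i(U) = \ip{A_i}{UU^T} - b_i$ and $S(U) = 2\mu \sum_{i=1}^m r_i(U) A_i \in \Snn$. Thus the FOSP condition $\nabla L_\mu(U) = 0$ reads $(C + S(U))\,U = 0$. If $\rank(U) = k$, the $k$ columns of $U$ are linearly independent vectors in $\ker(C+S(U))$, so $\rank(C+S(U)) \leq n - k$.

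Consequently, \emph{if} a rank-$k$ FOSP exists at $C$, then $C$ lies in the set
\begin{align*}
E \;=\; \bigl\{\, M - S \;:\; M \in \Snn,\ \rank(M) \leq n-k,\ S \in \linspan{A_1,\dots,A_m}\,\bigr\}.
\end{align*}
I would then invoke the standard fact that the determinantal variety $\mathcal{V}_{n-k} = \{M \in \Snn : \rank(M) \leq n-k\}$ is a real algebraic set of dimension $\binom{n+1}{2} - \binom{k+1}{2} = \tfrac{n(n+1)}{2} - \tfrac{k(k+1)}{2}$ (e.g., by decomposing into smooth strata via the spectral theorem). Adding to this the linear subspace $\linspan{A_1,\dots,A_m}$ of dimension at most $m$, the Minkowski-type sum $E$ is a (finite union of images of) smooth manifolds of total dimension at most $\tfrac{n(n+1)}{2} - \tfrac{k(k+1)}{2} + m$. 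The hypothesis $\tfrac{k(k+1)}{2} > m$ makes this strictly less than $\tfrac{n(n+1)}{2} = \dim \Snn$.

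The final step is a measure-theoretic cleanup: the image of a smooth map from a manifold of strictly smaller dimension into $\Snn$ has Lebesgue measure zero (e.g., by Sard's theorem, or directly by covering with charts). Hence the "bad" set $E$ has measure zero, so for almost every $C \in \Snn$ no FOSP of~\eqref{eq:penalty_factored} has full column rank. The main obstacle in writing this cleanly is the dimension count for $\mathcal{V}_{n-k}$ and the justification that $E$ is a measure-zero subset of $\Snn$; both are standard but need to be invoked carefully, since $\mathcal{V}_{n-k}$ is singular along the lower-rank strata and one must treat each stratum separately (or invoke the semi-algebraic dimension theorem in one shot). Everything else is a short computation.
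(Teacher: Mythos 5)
Your proposal is correct and follows essentially the same route as the paper: compute the gradient to get $(C + 2\mu\calA^*(\vr))U = 0$, deduce a nullity/rank bound on $C$ plus something in $\im\calA^*$, and conclude by a dimension count that the set of ``bad'' $C$'s has measure zero. The only cosmetic difference is that you work with the determinantal variety of symmetric matrices of rank $\leq n-k$ and flag the need to handle its singular strata, whereas the paper directly unions the smooth strata $\calN_\ell$ of exact nullity $\ell \geq k$ (citing Helmke--Shayman for their dimensions), which sidesteps that caveat; the resulting bound $\frac{n(n+1)}{2} - \frac{k(k+1)}{2} + m < \frac{n(n+1)}{2}$ is the same.
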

See Appendix~\ref{app:exact} for a detailed proof. 

These two lemmas lead to an important corollary regarding the factorization approach.
\begin{corollary} \label{cor:exactpenaltyfactorized}
	Given an SDP in penalized and factorized form~\eqref{eq:penalty_factored} with $k$ such that $\frac{k(k+1)}{2} > m$, for almost any cost matrix $C$, deterministically, any SOSP $U$ is a global optimum, and $UU^T$ is a global optimum for~\eqref{eq:penalty_sdp}.
\end{corollary}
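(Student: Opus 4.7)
The corollary is essentially a direct synthesis of Lemma~\ref{lem:global} and Lemma~\ref{lem:rank_deficient}, so the plan is to verify that both lemmas apply to the penalty objective $\Fm(X) = \ip{C}{X} + \mu \sum_{i=1}^m (\ip{A_i}{X} - b_i)^2$ and then chain their conclusions.

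First, I would establish that the hypothesis of Lemma~\ref{lem:global} is satisfied by $f = \Fm$. The function $\Fm$ is a (convex) quadratic in $X$: the term $\ip{C}{X}$ is affine, and each $(\ip{A_i}{X} - b_i)^2$ is a squared affine function of $X$, hence convex. Therefore $\Fm$ is a sum of convex functions, and it is manifestly twice continuously differentiable since it is a polynomial in the entries of $X$. This puts problem~\eqref{eq:penalty_sdp} in the form~\eqref{eq:prob_fx} and identifies the factorized problem~\eqref{eq:penalty_factored} with~\eqref{eq:prob_fx_U} under the parameterization $X = UU^\top$.

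Next, I would invoke Lemma~\ref{lem:rank_deficient}: since $\frac{k(k+1)}{2} > m$, there is a measure-zero exceptional set $\mathcal{N} \subset \Snn$ such that for every $C \notin \mathcal{N}$, every FOSP of~\eqref{eq:penalty_factored} is column rank-deficient. Fix such a $C$ and let $U$ be an SOSP. Since any SOSP is, in particular, a FOSP, we obtain $\rank(U) < k$. Now apply Lemma~\ref{lem:global} with $f = \Fm$: the hypothesis $\rank(U) < k$ is met, so $U$ is a global minimum of~\eqref{eq:penalty_factored} and $UU^\top$ is a global minimum of~\eqref{eq:penalty_sdp}.

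There is no serious obstacle in this proof; it is a bookkeeping argument that combines the two previous lemmas. The only subtlety worth flagging in writing is that Lemma~\ref{lem:rank_deficient} only guarantees rank deficiency \emph{if} an SOSP exists --- the corollary inherits this caveat implicitly (``any SOSP $U$'' is a universally quantified statement that is vacuously true when no SOSP exists), and I would note this parenthetically so as not to mislead the reader into thinking existence is being asserted. A second small point is that the measure-zero exceptional set of cost matrices is exactly the one produced by Lemma~\ref{lem:rank_deficient}, so ``for almost any $C$'' in the corollary refers to Lebesgue measure on $\Snn$ and no further measure-theoretic argument is needed.
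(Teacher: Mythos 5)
Your proposal is correct and takes exactly the route the paper intends: the corollary is stated immediately after ``These two lemmas lead to an important corollary,'' and indeed the argument is precisely to note that $\Fm$ is convex and twice continuously differentiable (so Lemma~\ref{lem:global} applies), that for almost every $C$ Lemma~\ref{lem:rank_deficient} forces any FOSP---hence any SOSP---to be column rank-deficient, and then to chain the two. Your parenthetical caveats about vacuous truth when no SOSP exists and about the exceptional set being exactly the one from Lemma~\ref{lem:rank_deficient} are accurate and consistent with the paper's own remarks following the corollary.
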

To ensure existence of such solutions, it is necessary to include additional conditions (for example, on the constraints of the SDP.) From~\citep{pataki1998rank,barvinok1995problems}, it is known that SDPs with non-empty, compact search spaces can have a unique solution of rank up to the maximal $k$ such that $\frac{k(k+1)}{2} \leq m$. This indicates that, in general, the condition on $k$ cannot be improved. 

These observations lead to the following two natural questions: 
\begin{enumerate}
	\item Our result holds only for ``typical'' $C$. Is this an artifact of our proof technique, or is it necessary to exclude a zero-measure set of cost matrices $C$?
	\item Our result holds only for \emph{exact} SOSPs, which in general are hard to compute. Numerical methods tend to provide approximate SOSPs only. Can we extend the results to \emph{approximate} SOSPs as well?
\end{enumerate}
The next section answers the first question in the affirmative: there do exist ``bad'' matrices $C$ for some SDPs, so that any result of the type of Corollary~\ref{cor:exactpenaltyfactorized} must exclude at least some SDPs. To address the second question, we resort to smoothed analysis, that is, for large classes of SDPs in penalty form, upon perturbing the cost matrix randomly, we show that approximate SOSPs are also good enough to obtain approximately globally optimal solutions of the perturbed problem. 

\section{Exact second-order points sometimes are suboptimal}
\label{sec:exact_sub}

Below, we construct an SDP which confirms that it is indeed necessary (in full generality) to exclude some SDPs in Corollary~\ref{cor:exactpenaltyfactorized}, even if $k$ is allowed to grow large.

Pick $n \geq 3$ and set $\eps = \sqrt{\frac{6}{n-1}}$. Consider the following $m = n+1$ constraint matrices in $\Snn$:
\begin{align*}
A_i & = e_i \trans{e_{n}} + e_{n} \trans{e_{i}}, \quad i = 1, \cdots, n-1, \\
A_{n} & = \epsilon \begin{bmatrix}
I_{n-1} & 0 \\ 0 & 1
\end{bmatrix}, \quad \mbox{ and } \quad 
A_{n+1} = \epsilon \begin{bmatrix}
2I_{n-1} & 0 \\ 0 & 1
\end{bmatrix},
\end{align*}
where $e_i \in \R^n$ is the $i$th standard basis vector (the $i$th column of $I_n$).
In words, for $i = 1, \ldots, n-1$, each $A_i$ has only two non-zero entries---both equal to one---located in row $i$ of the last column and symmetrically in column $i$ of the last row.
Pair these matrices with the right-hand side vector $b \in \R^m$ defined by
\begin{align*}
	b_1 = \cdots = b_{n-1} = 0, \quad \textrm{ and } \quad  b_{n} = b_{n+1} = \eps \frac{5(n-1)}{3}.
\end{align*}
Finally, set the cost matrix $C$ to be zero. (A distinct advantage of picking $C = 0$ is that it makes the choice of $\mu > 0$ irrelevant in defining~\eqref{eq:penalty_factored}.) These prescriptions fully define the SDP~\eqref{eq:sdp} and its associated factorized and penalized problem~\eqref{eq:penalty_factored}, which we can write here as:
\begin{align}
	\underset{U \in \R^{n\times k}}{\minimize} \quad L(U) = \frac{1}{2} \sum_{i=1}^{n+1} \left(\ip{A_i}{UU^T} - b_i\right)^2. \label{eqn:npsdp2}
\end{align}
\begin{theorem}\label{thm:bad_sdp}
	The SDP defined above admits a global optimum of rank 1. Furthermore, for $k = n-1$, $\bar U \defeq \begin{bmatrix}I_{n-1} \\ 0 \end{bmatrix}$ is a suboptimal SOSP of~\eqref{eqn:npsdp2}.
\end{theorem}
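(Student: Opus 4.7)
The plan is to verify the three claims in turn by direct computation: (i) exhibit a rank-$1$ feasible point (hence a global optimum since $L \geq 0$ and $C=0$); (ii) compute $L(\bar U) > 0$; (iii) check that the gradient vanishes at $\bar U$; (iv) check that the Hessian quadratic form is PSD at $\bar U$, where the specific value $\varepsilon^2 = 6/(n-1)$ is engineered to produce an exact cancellation.

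First I would look for a rank-$1$ feasible $X = vv^\top$. The constraints $\langle A_i, X\rangle = 2 v_i v_n = 0$ for $i<n$ force either $v_n = 0$ or $v_1=\cdots=v_{n-1}=0$. The first option is incompatible with the remaining two constraints (since $\langle A_{n+1},X\rangle = 2\langle A_n, X\rangle$ when $v_n=0$, contradicting $b_{n+1}=b_n\neq 0$), so $v = \sqrt{5(n-1)/3}\, e_n$ is forced and trivially satisfies all constraints. Hence $L$ attains its lower bound $0$ on a rank-$1$ point.

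Next I would compute the four residues $h_i(\bar U) = \langle A_i, \bar U \bar U^\top\rangle - b_i$ at $\bar U$, noting $\bar U \bar U^\top = \mathrm{diag}(I_{n-1},0)$: the first $n-1$ vanish, while $h_n = -\tfrac{2}{3}\varepsilon(n-1)$ and $h_{n+1} = \tfrac{1}{3}\varepsilon(n-1)$, giving $L(\bar U) = 5(n-1)/3 > 0$, so $\bar U$ is suboptimal. For the gradient, $\nabla L(U) = 2\sum_i h_i(U)\, A_i U$; since only $i=n,n+1$ contribute and $A_n \bar U = \varepsilon \bar U$, $A_{n+1}\bar U = 2\varepsilon \bar U$, one gets $\nabla L(\bar U) = 2\varepsilon \bar U\,(h_n + 2h_{n+1}) = 0$ by the arithmetic of the residues—this is precisely why $b_n = b_{n+1}$ was chosen.

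The crux is the Hessian. Using the expansion $h_i(\bar U + \Delta) = h_i(\bar U) + 2\langle A_i \bar U, \Delta\rangle + \langle A_i, \Delta\Delta^\top\rangle$, one finds
\begin{align*}
\tfrac12 \nabla^2 L(\bar U)[\Delta,\Delta] \;=\; \sum_{i=1}^{n+1}\Bigl(2\langle A_i \bar U, \Delta\rangle^2 + h_i(\bar U)\,\langle A_i, \Delta\Delta^\top\rangle\Bigr).
\end{align*}
For $i < n$, $A_i \bar U = e_n e_i^\top$, so $\langle A_i \bar U, \Delta\rangle = \Delta_{ni}$, and writing $a := \sum_{i=1}^{n-1}\Delta_{ni}^2$, $s := \sum_{i=1}^{n-1}\Delta_{ii}$, $F := \|\Delta\|_F^2$, the contributions from $i<n$ give $2a$. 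For $i=n$: $A_n = \varepsilon I$, contributing $2\varepsilon^2 s^2 \cdot \text{(from gradient term)} + h_n \varepsilon F$; for $i=n+1$: $\langle A_{n+1},\Delta\Delta^\top\rangle = \varepsilon(2F - \|\Delta_{n,:}\|^2)$ and $\|\Delta_{n,:}\|^2 = a$, contributing $8\varepsilon^2 s^2 + h_{n+1}\varepsilon(2F-a)$. Combining and substituting $h_n + 2h_{n+1}=0$ kills the $F$ terms, leaving
\begin{align*}
\tfrac12 \nabla^2 L(\bar U)[\Delta,\Delta] \;=\; 2a - h_{n+1}\varepsilon\, a + 10\varepsilon^2 s^2 \;=\; \Bigl(2 - \tfrac{\varepsilon^2(n-1)}{3}\Bigr)a + 10\varepsilon^2 s^2.
\end{align*}
The choice $\varepsilon^2 = 6/(n-1)$ makes the coefficient of $a$ vanish exactly, yielding $\tfrac12\nabla^2 L(\bar U)[\Delta,\Delta] = 10\varepsilon^2 s^2 \geq 0$, establishing the second-order condition.

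The main obstacle is the Hessian bookkeeping: one must carefully identify which directions $\Delta$ could in principle produce negative curvature (namely, the off-diagonal/last-row directions captured by $a$), and verify that the precise values of $b_n, b_{n+1}$ (making $h_n + 2h_{n+1} = 0$) together with $\varepsilon^2 = 6/(n-1)$ are exactly what is needed to zero out those dangerous terms while leaving only the manifestly nonnegative $s^2$ term. All other steps are routine substitution.
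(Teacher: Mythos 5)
Your proposal is correct and follows essentially the same approach as the paper: exhibit the forced rank-$1$ feasible point so that $L\geq 0$ is attained, compute $L(\bar U) > 0$, verify $\nabla L(\bar U)=0$ via $h_n + 2h_{n+1}=0$, and show the Hessian is PSD because the negative contribution from $h_{n+1}$ is cancelled exactly by $\sum_{i<n}2\langle A_i\bar U,\Delta\rangle^2$ precisely when $\varepsilon^2 = 6/(n-1)$. The only organizational difference is that you expand $h_i(\bar U + \Delta)$ by hand while the paper routes the same computation through the operator $\calA$ and the observation that $\calA^*(\vr(\bar U)) = -\tfrac{(n-1)\varepsilon^2}{3}e_ne_n^T$, arriving at the identical bound (your $20\varepsilon^2 s^2$ is the paper's $q_1^2+q_2^2$). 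One trivial slip: there are $n+1$ residues, not ``four''; but your computation correctly treats all of them, so this is only a wording typo.
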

See Appendix~\ref{app:exact_sub} for a detailed proof of the theorem. 

\section{Approximate second-order points: smoothed analysis}
\label{sec:smooth}

Recall that Corollary~\ref{cor:exactpenaltyfactorized} shows that {\em exact} SOSPs of \eqref{eq:penalty_factored} are optimal for almost all cost matrices $C$. However, obtaining exact SOSPs is challenging in practice. Standard optimization algorithms such as the trust-region method and the cubic regularization method~\citep{nesterov2006cubic,cartis2012complexity}, when run for finitely many iterations, converge to an {\em approximate} SOSP only, as defined below. 
All proofs for this section are in Appendix~\ref{app:smooth}.
\begin{definition}[$\eps$-FOSP]
	We call $U$ an $\eps$-FOSP of a function $f(U)$ if:
	\begin{align*}
		\| \nabla f(U) \|_F \leq \eps.
	\end{align*}
\end{definition}
\begin{definition}[$(\eps, \gamma)$-SOSP]
	We call $U$ an $(\eps, \gamma)$-SOSP of a function $f(U)$ if:
	\begin{align*}
		\| \nabla f(U) \|_F \leq \eps ~~ \text{and} ~~ \lambda_{\min} (\nabla^2 f(U)) \geq -\gamma \sqrt{\eps}.
	\end{align*}
\end{definition}
As an extension to Lemma~\ref{lem:global}---which states rank-deficient \emph{exact} SOSPs are optimal---we now show that approximate SOSPs which are also approximately rank deficient are indeed approximately optimal. 
To this end, we define the linear operator $\calA \colon \Snn \to \Rm$ with $\calA(X)_i = \ip{A_i}{X}$. We use the following notion of norm for $\calA$:
\begin{align}
	\|\calA\| & \triangleq \max_{Y \in \Snn, \|Y\|_F \leq 1} \|\calA(Y)\|_2, & \|\calA\| = \|\calA^*\| & \triangleq \max_{y \in \Rm, \|y\|_2 \leq 1} \|\calA^*(y)\|_F.
	\label{eq:normofA}
\end{align}
Furthermore, we define the residue at a point $U$ to be the vector of constraint violations:
\begin{align}
	\vr & = \vr(U) = \vr(UU^T) \triangleq \calA(UU^T) - \vb.
	\label{eq:residue}
\end{align}
\begin{lemma}\label{lem:compact_optimal_approx}
	Let $U \in \Rnk$ be an $(\eps, \gamma)$-SOSP of~\eqref{eq:penalty_factored} such that $\sigma_k^2(U) \leq \frac{\gamma \sqrt{\eps}}{8 \mu\|\calA\|^2}$. Then,
	\begin{align*}
		\lambda_{\min}(C + 2\mu\calA^*(\vr)) \geq -\gamma\sqrt{\eps}.
	\end{align*}
	Furthermore, if a global optimum $\tilde X$ for~\eqref{eq:penalty_sdp} exists, then the optimality gap obeys:
	\begin{align*}
		F_\mu(UU^T) - F_\mu(\tilde X) & \leq \gamma \sqrt{\eps} \operatorname{Tr}(\tilde X) + \frac{1}{2}\eps \|U\|_F.
	\end{align*}
	(Once again, we stress that $U$ and $\tilde X$ as prescribed may not exist.)
\end{lemma}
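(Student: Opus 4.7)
The plan is to exploit the convex structure of $F_\mu$ on $\Snn_+$ together with the first- and second-order stationarity information of $L_\mu$ at $U$, connected through the symmetric matrix
\[
S \defeq C + 2\mu \calA^*(\vr) = \nabla F_\mu(UU^T).
\]
First I would compute the gradient and Hessian of $L_\mu$ at $U$. A direct calculation yields $\nabla L_\mu(U) = 2SU$, and for any direction $V \in \Rnk$,
\[
\nabla^2 L_\mu(U)[V,V] = 2\ip{S}{VV^T} + 2\mu \norm{\calA(UV^T+VU^T)}_2^2.
\]
The $\eps$-FOSP condition then reads $\norm{SU}_F \leq \eps/2$, while the curvature bound $\lambda_{\min}(\nabla^2 L_\mu(U)) \geq -\gamma\sqrt{\eps}$ translates into
\[
2\ip{S}{VV^T} + 2\mu \norm{\calA(UV^T+VU^T)}_2^2 \geq -\gamma\sqrt{\eps}\,\norm{V}_F^2 \qquad \text{for all }V.
\]

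The first claim (lower bound on $\lambda_{\min}(S)$) is obtained by testing this inequality on a carefully chosen rank-one direction $V = vw^T$, where $v \in \Rn$ is a unit eigenvector attaining $\lambda_{\min}(S)$ and $w \in \Rk$ is a unit right-singular vector of $U$ associated with $\sigma_k(U)$. Then $\ip{S}{VV^T} = \lambda_{\min}(S)$, $\norm{V}_F = 1$, and $\norm{UV^T + VU^T}_F \leq 2\norm{Uw} = 2\sigma_k(U)$, so $\norm{\calA(UV^T+VU^T)}_2 \leq 2\norm{\calA}\sigma_k(U)$. Plugging in gives
\[
2\lambda_{\min}(S) + 8\mu \norm{\calA}^2 \sigma_k^2(U) \geq -\gamma\sqrt{\eps},
\]
and the hypothesis $\sigma_k^2(U) \leq \gamma\sqrt{\eps}/(8\mu\norm{\calA}^2)$ absorbs the second term into $\gamma\sqrt{\eps}/2$, yielding $\lambda_{\min}(S) \geq -\gamma\sqrt{\eps}$.

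For the optimality gap, I would invoke convexity of $F_\mu$: since $\nabla F_\mu(UU^T) = S$,
\[
F_\mu(\tilde X) \geq F_\mu(UU^T) + \ip{S}{\tilde X - UU^T},
\]
so $F_\mu(UU^T) - F_\mu(\tilde X) \leq \ip{S}{UU^T} - \ip{S}{\tilde X}$. The PSD cone inequality $\ip{S}{\tilde X} \geq \lambda_{\min}(S)\trace{\tilde X}$ combined with the first claim bounds the second term by $\gamma\sqrt{\eps}\trace{\tilde X}$; the first term rewrites as $\ip{SU}{U}$ and is controlled by Cauchy--Schwarz together with the FOSP bound, giving $\ip{SU}{U} \leq \norm{SU}_F \norm{U}_F \leq \tfrac{1}{2}\eps \norm{U}_F$. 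Adding the two bounds yields the claimed inequality.

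The only nontrivial step is the Hessian test direction: one must simultaneously make $\ip{S}{VV^T}$ equal $\lambda_{\min}(S)$ and keep $\norm{\calA(UV^T+VU^T)}_2$ small, which forces the rank-one structure $V = vw^T$ with $w$ chosen to exploit the near-rank-deficiency of $U$ expressed by the $\sigma_k^2(U)$ hypothesis. Once this is set up, everything else is essentially bookkeeping.
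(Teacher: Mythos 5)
Your proposal is correct and follows essentially the same route as the paper: you introduce $S = C + 2\mu\calA^*(\vr) = \nabla F_\mu(UU^T)$, derive the same quadratic form for the Hessian of $L_\mu$, test it against the same rank-one direction $V = vw^T$ (with $v$ a minimizing eigenvector of $S$ and $w$ the right singular vector of $U$ for $\sigma_k$) to obtain the eigenvalue bound, and then combine convexity of $F_\mu$ with the PSD-cone inequality $\ip{S}{\tilde X} \geq \lambda_{\min}(S)\trace{\tilde X}$ and the Cauchy--Schwarz bound on $\ip{SU}{U}$ to get the optimality gap. All steps are sound, and the constants match the hypothesis exactly.
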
 
To reach a statement about approximate optimality of approximate SOSPs, it remains to show that approximate FOSPs are approximately rank deficient. Such a result would constitute a generalization of Lemma~\ref{lem:rank_deficient}. In that lemma, we had to exclude a pathological set of ``bad'' matrices $C$. Hence, here too, we expect to encounter difficulties with some $C$'s.

For this reason, we resort to a \emph{smoothed analysis}. That is: on the off-chance that the cost matrix $C$ is ``bad'', we perturb it with a random Gaussian matrix. We further assume that (a) $k$ is large enough, and (b) approximate FOSPs have bounded residues $\vr$. That residues are indeed bounded is established under special conditions in later subsections.
\begin{theorem}\label{thm:compact_eps_fosp}
	Draw a random matrix $G$ with $G_{ij} \sim \calN(0, \sigma_G^2)$ i.i.d.\ for $i \leq j$ and $G = G^T$. Let $U \in \Rnk$ be an $\eps$-FOSP of~\eqref{eq:penalty_factored} with perturbed cost matrix $C+G$. Assume there exists a constant $B$ which only depends on the problem parameters $\calA, \vb, C$ and on $\eps, \mu$ such that:
	\begin{enumerate}
		\item With probability at least $1-\delta$ on the choice of $G$, all $\eps$-FOSPs of the perturbed problem have bounded residue: $\|\vr\|_2  \leq B$, and
		\item $k \geq 3\left[\log(\frac{n}{\delta'}) + \sqrt{ \rank(\calA)   \log\left( 1 + \frac{8 \mu B\|\calA\| \sqrt{\const  n}}{\sG } \right) }  \right]$ for some $\delta' \in (0, 1)$, where $\const$ is a universal constant.
	\end{enumerate}
	Then, with probability at least $1 - \delta - \delta'$,
	\begin{align*}
	\sigma_k(U) \leq \frac{ 2\epsilon}{\sG } \frac{\sqrt{\const n}}{k}.
	\end{align*}
\end{theorem}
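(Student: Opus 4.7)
The plan is to reduce the conclusion to a quantitative lower bound on the $(n-k+1)$-th singular value of the effective matrix $M := C + G + 2\mu\calA^*(\vr)$, and then to establish this lower bound uniformly in $\vr$ by combining a smoothed analysis of the $k$-th smallest singular value of a GOE-shifted symmetric matrix with an $\eta$-net over the $\rank(\calA)$-dimensional image of $\calA^*$. Concretely, writing $\nabla L_\mu(U) = 2(C+G+2\mu\calA^*(\vr))U = 2MU$, the $\eps$-FOSP condition gives $\|MU\|_F \leq \eps/2$. With the thin SVD $U = P\Sigma Q^\top$ ($P \in \Rnk$ with orthonormal columns, $\Sigma = \mathrm{diag}(\sigma_1(U),\ldots,\sigma_k(U))$) and the Cauchy-interlacing bound $\|MP\|_F^2 = \mathrm{tr}(P^\top M^\top M P) \geq \sigma_{n-k+1}^2(M)$, we obtain
\[
\tfrac{\eps^2}{4} \;\geq\; \|MU\|_F^2 \;=\; \sum_{i=1}^k \sigma_i^2(U)\,\|Mp_i\|^2 \;\geq\; \sigma_k^2(U)\,\|MP\|_F^2 \;\geq\; \sigma_k^2(U)\,\sigma_{n-k+1}^2(M),
\]
hence $\sigma_k(U) \leq \eps/(2\sigma_{n-k+1}(M))$. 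It therefore suffices to prove $\sigma_{n-k+1}(M) \geq \sigma_G k/\sqrt{\const n}$ with probability at least $1 - \delta - \delta'$.

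\textbf{Residue net.} By assumption~(1), with probability at least $1 - \delta$ we have $\|\vr\|_2 \leq B$, so $2\mu\calA^*(\vr)$ lies in the Frobenius ball of radius $2\mu B\|\calA\|$ inside the $\rank(\calA)$-dimensional subspace $\im(\calA^*) \subset \Snn$. A standard volumetric argument produces an $\eta$-net $\calN$ of this ball with
\[
\log|\calN| \;\leq\; \rank(\calA)\,\log\!\bigl(1 + 4\mu B\|\calA\|/\eta\bigr).
\]
For the realized $\vr$ we pick $\widetilde H \in \calN$ with $\|2\mu\calA^*(\vr) - \widetilde H\|_F \leq \eta$; Weyl's inequality then yields $\sigma_{n-k+1}(M) \geq \sigma_{n-k+1}(C + \widetilde H + G) - \eta$, so it suffices to bound $\sigma_{n-k+1}(C + \widetilde H + G)$ from below uniformly over the net.

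\textbf{Smoothed analysis and union bound.} For each fixed $\widetilde H \in \calN$, the matrix $C + \widetilde H + G$ is a deterministic symmetric matrix plus GOE perturbation of variance $\sigma_G^2$. A quantitative Sankar--Spielman--Teng-type estimate for the $k$-th smallest singular value of a GOE-shifted symmetric matrix yields, for every $t > 0$, a bound of the form
\[
\Pr{\sigma_{n-k+1}(C + \widetilde H + G) \leq t} \;\leq\; n\cdot\bigl(c\, t \sqrt{n}/(\sigma_G k)\bigr)^{k^2}.
\]
Choosing $\eta = t$ of order $\sigma_G k/(4\sqrt{\const n})$ (so that the Weyl correction still leaves a lower bound of order $\sigma_G k/\sqrt{\const n}$) and taking a union bound over $\calN$, the total failure probability is at most
\[
n\cdot\exp\!\Bigl(-c' k^2 + \rank(\calA)\log\!\bigl(1 + 8\mu B\|\calA\|\sqrt{\const n}/\sigma_G\bigr)\Bigr),
\]
which is made $\leq \delta'$ precisely by the hypothesis $k \geq 3[\log(n/\delta') + \sqrt{\rank(\calA)\log(1 + 8\mu B\|\calA\|\sqrt{\const n}/\sigma_G)}]$. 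Combining with the singular-value inequality of Step~1 then gives $\sigma_k(U) \leq (2\eps/\sigma_G)\sqrt{\const n}/k$, with overall failure probability at most $\delta + \delta'$.

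The main obstacle is the smoothed-analysis estimate: controlling the $k$-th smallest singular value of a GOE-shifted symmetric matrix with an exponent of order $k^2$ in the failure probability is the technical heart of the argument, and must be carried out carefully to accommodate that the $k$ near-null directions of $M$ are not independent. The additive decomposition $k \gtrsim \log(n/\delta') + \sqrt{\rank(\calA)\log(\cdots)}$ in the hypothesis is dictated by this balance: the $k^2$ exponent absorbs both the intrinsic factor of $n$ in the one-sided singular-value anti-concentration (through the $\log(n/\delta')$ term) and the log-cardinality of the $\rank(\calA)$-dimensional residue net.
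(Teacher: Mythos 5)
Your proposal follows essentially the same route as the paper: reduce the $\eps$-FOSP condition to a lower bound on the small singular values of the data-dependent effective matrix $C + G + 2\mu\calA^*(\vr)$, observe that boundedness of $\vr$ (with probability $1-\delta$) confines $2\mu\calA^*(\vr)$ to a Frobenius ball in the $\rank(\calA)$-dimensional subspace $\im\calA^*$, cover that ball with an $\eta$-net, invoke an anti-concentration estimate with failure probability $\exp(-\Theta(k^2))$ for each fixed net point (where the deterministic part is now independent of $G$), and union-bound over the net. The thin-SVD step, the use of Weyl/Hoffman--Wielandt-type perturbation of singular values, and the balancing of $\eta$ against the target lower bound all match the paper's argument.

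The one place your sketch is imprecise on attribution: the $k^2$-exponent anti-concentration for the $k$-th smallest singular value of a GOE-perturbed symmetric matrix is not a Sankar--Spielman--Teng bound (that gives anti-concentration for the \emph{least} singular value only, with a single-power exponent); the paper instead relies on an eigenvalue-repulsion estimate of Nguyen (Corollary 1.17 of~\citep{nguyen2017repulsion}), stated as Lemma~\ref{lem:eigenvalue_main}, which controls the probability of $k$ eigenvalues simultaneously clustering in a small interval. You correctly identify this as the technical heart and correctly state the needed form of the bound; the citation just points to the wrong tool. Also, what you call ``Cauchy interlacing'' in $\mathrm{tr}(P^\top M^\top M P) \geq \sigma_{n-k+1}^2(M)$ is more precisely Ky Fan's variational principle for partial traces (the paper states it as $\|MP\|_F \geq \sqrt{\sum_{i=1}^k \sigma_{n-i+1}(M)^2}$, which is the same thing before dropping all but the largest term). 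With those references fixed, your outline is a faithful reconstruction of the paper's proof.
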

Crucially, notice that $\rank(\calA) \leq m$, so that (up to log factors) $k$ is required to grow like $\sqrt{m}$, as desired.

\subsection{Compact SDPs} \label{sec:compact}

To leverage Lemma~\ref{lem:compact_optimal_approx} and Theorem~\ref{thm:compact_eps_fosp}, we must control the residues at approximate FOSPs of~\eqref{eq:penalty_factored}. This is delicate in general. In this part, we make the following assumption.
\begin{assumption}\label{assu:compactnonempty}
	The search space $\calC = \{ X \succeq 0 : \calA(X) = b \}$ of the SDP~\eqref{eq:sdp} is non-empty and compact, where $\calA \colon \Snn \to \R^m$ is the linear operator defined by $\calA(X)_i = \ip{A_i}{X}$.
\end{assumption}
When this is the case, standard results from~\citep{barvinok1995problems,pataki1998rank} guarantee the existence of a global optimum of rank $r$ where $\frac{r(r+1)}{2} \leq m$ for the SDP~\eqref{eq:sdp}---always. It is reasonable to expect such low-rank solutions might also exist for the penalized problem~\eqref{eq:penalty_sdp}, and that one should be able to compute these by solving the factorized problem~\eqref{eq:penalty_factored}---at least, generically. This section is about making these expectations precise in the soft case, where one only computes approximate SOSPs.

A technical necessity in our proofs is to show that FOSPs of~\eqref{eq:penalty_factored} have bounded norm. To do this, we need a technical modification of~\eqref{eq:penalty_factored}. Specifically, consider the following geometric fact.
\begin{proposition}\label{prop:compactSDPconsraintA0}
	For a given SDP~\eqref{eq:sdp}, assume $\calC$ is non-empty. Then, $\calC$ is compact if and only if there exists a positive definite matrix $A_0$ and a nonnegative real $b_0$ such that $\ip{A_0}{X} = b_0$ for all $X \in \calC$. Furthermore, unless $\calC = \{0\}$, $b_0 > 0$.
\end{proposition}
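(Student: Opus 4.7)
The statement has an easy ``if'' direction and a harder ``only if'' direction, and I would handle them separately. For ``if'', assume $A_0 \succ 0$ and $b_0 \geq 0$ with $\ip{A_0}{X} = b_0$ on $\calC$. For any $X \in \calC$, the bound $b_0 = \ip{A_0}{X} \geq \lambda_{\min}(A_0)\trace{X}$ together with $\lambda_{\min}(A_0) > 0$ uniformly bounds $\trace{X}$, hence $\|X\|_F$ (since PSD matrices satisfy $\|X\|_F \leq \trace{X}$). Combined with closedness of $\calC$ as the intersection of the PSD cone and an affine subspace, compactness follows.

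For ``only if'', assume $\calC$ non-empty and compact. My first step is to identify the recession cone of $\calC$ as $\{Y \succeq 0 : \calA(Y) = 0\}$: any recession direction $Y$ must satisfy $\calA(Y) = 0$ to preserve the affine constraint under translation by $tY$, and must satisfy $Y \succeq 0$ by letting $t \to \infty$ in $X + tY \succeq 0$. Since a non-empty closed convex set is compact iff its recession cone is trivial, this gives $\ker \calA \cap \{Y \succeq 0\} = \{0\}$.

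The core of the argument is then a separation step on the spectraplex $T = \{Y \succeq 0 : \trace{Y} = 1\}$, a compact convex set. Its image $\calA(T) \subset \R^m$ is compact convex and avoids the origin---otherwise a preimage would be a nonzero PSD matrix in $\ker \calA$. Strict separation of $\{0\}$ from $\calA(T)$ produces $y \in \R^m$ with $\ip{\calA^*(y)}{Y} = \langle y, \calA(Y)\rangle > 0$ for every $Y \in T$; since $T$ contains every rank-one matrix $vv^T/\|v\|^2$ with $v \neq 0$, this yields $v^T \calA^*(y) v > 0$ for all nonzero $v$, i.e.\ $\calA^*(y) \succ 0$. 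Setting $A_0 = \calA^*(y)$ and $b_0 = \langle y, b \rangle$, any $X \in \calC$ satisfies $\ip{A_0}{X} = \langle y, \calA(X) \rangle = \langle y, b \rangle = b_0$, and $b_0 \geq 0$ because $A_0 \succ 0$ and $X \succeq 0$. If $\calC \neq \{0\}$, picking a nonzero $X \in \calC$ gives $b_0 = \ip{A_0}{X} > 0$, since the trace-inner-product of a positive definite matrix with a nonzero PSD is strictly positive.

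The principal obstacle is the separation step: upgrading the qualitative statement ``the kernel of $\calA$ meets the PSD cone only at zero'' to the quantitative statement ``$\im(\calA^*)$ contains a positive definite matrix'' is a theorem of alternatives for the PSD cone. The spectraplex argument is what makes this painless, because rank-one PSD matrices span enough of $T$ to turn ``positive on $T$'' into ``positive definite''; the remaining bookkeeping (existence of $b_0$, its sign, and strict positivity when $\calC \neq \{0\}$) is immediate once $A_0$ is in hand.
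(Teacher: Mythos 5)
Your proof is correct, and while the ``if'' direction matches the paper's (both bound $\trace{X}$ via $\lambda_{\min}(A_0)$), your ``only if'' direction takes a genuinely different route. The paper appeals to SDP weak-duality theory: it forms the primal-dual pair, invokes the classical fact that if the dual is infeasible then the primal is unbounded or infeasible, concludes from compactness of $\calC$ that the dual is feasible for \emph{every} cost matrix, and then plugs in $C = -I_n$ to extract $y$ with $\calA^*(y) \succeq I_n$, taking $A_0 = \calA^*(y)$. You instead argue directly: compactness of a non-empty closed convex set forces its recession cone $\{Y \succeq 0 : \calA(Y) = 0\}$ to be trivial, and then a strict hyperplane separation of $\calA(T)$ (the image of the spectraplex, a compact convex set not containing the origin) from $\{0\}$ produces $y$ with $\ip{\calA^*(y)}{Y} > 0$ on all of $T$, which---because $T$ contains all unit-trace rank-one matrices---is exactly $\calA^*(y) \succ 0$. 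Both arguments deliver $A_0 = \calA^*(y) \in \im\calA^*$ and hence $\ip{A_0}{X} = \ip{y}{b}$ is constant on $\calC$; the remaining bookkeeping on $b_0$ agrees. Your approach is more elementary and self-contained, using only finite-dimensional separation rather than an SDP duality theorem that itself rests on a separation argument; the paper's is terser if one is willing to cite the duality result, and it isolates the $\calC = \{0\}$ case up front, whereas your separation argument handles it uniformly (the recession cone is still trivial, and $b_0 = \ip{y}{b} = 0$ falls out since $b = 0$).
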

Thus, under Assumption~\ref{assu:compactnonempty}, we can rewrite~\eqref{eq:sdp} with an explicit redundant constraint involving $A_0 \succ 0$:
\begin{align}
	\underset{X \in \Snn}{\minimize} \quad & \ip{C}{X} \nonumber \\
	\text{subject to} \quad & \ip{A_0}{X} = b_0, \nonumber\\
							& \ip{A_i}{X} = b_i,\quad i = 1,\cdots, m, \textrm{ and } X \succeq 0.
	\label{eq:sdpcompact}
\end{align}
Accordingly, we define $\tilde \calA \colon \Snn \to \R^{m+1}$ and $\tilde \vb \in \R^{m+1}$ such that $\tilde \calA(X)_i = \ip{A_i}{X}$ for $i = 0, \ldots, m$, and $\calC = \{ X \succeq 0 : \tilde \calA(X) = \tilde \vb \}$. With the extended residue definition
\begin{align}
	\tilde\vr & = \tilde\vr(U) = \tilde\vr(UU^T) \triangleq \tilde \calA(UU^T) - \tilde \vb,
\end{align}
the associated penalty formulations are:
\begin{align}
	\underset{X \succeq 0}{\minimize} ~\quad \tilde\Fm(X) & =  \ip{C}{X} + \mu \|\tilde \vr(X)\|_2^2, \label{eq:penalty_sdp_compact} \\
	\underset{U \in \Rnk}{\minimize} \quad  \tilde L_{\mu}(U) & =  \ip{C}{UU^T} + \mu \|\tilde \vr(U)\|_2^2. \label{eq:penalty_factored_compact}
\end{align}
We note that, in full generality, finding $(A_0, b_0)$ as in Proposition~\ref{prop:compactSDPconsraintA0} may be as hard as solving an SDP, but in practical applications $(A_0, b_0)$ may be easy to determine. (For example, for the Max-Cut SDP,  feasible matrices have constant trace $n$, so that $A_0 = I_n$ and $b_0 = n$ are suitable.) More generally, SDPs with a trace constraint satisfy this with $A_0=I_n$. 

For this modified formulation, approximate FOSPs have bounded norm and bounded residues.
\begin{lemma}\label{lem:residues_compact}
	Consider problem~\eqref{eq:penalty_factored_compact} with $A_0 \succ 0$ and $b_0 \geq 0$. For any $U$,
	\begin{align*}
		\|\vr\|_2 & \leq \|\calA\| \|U\|_F^2 + \|\vb\|_2, & \textrm{ and } & & \|\tilde \vr\|_2 & \leq \|\tilde \calA\| \|U\|_F^2 + \|\tilde \vb\|_2.
	\end{align*}
	If $U$ is an $\eps$-FOSP and $b_0 > 0$, then
	\begin{small}
	\begin{align*}
		\|U\|_F^2 & \leq \max\left\{ \left(\frac{\eps}{2\mu \bz \lambda_{\max}(A_0)}\right)^2, \frac{1}{\lambda_{\min}(A_0)^2} \left(\frac{\|C\|_2}{2\mu} + \frac{3}{2}\bz \lambda_{\max}(A_0)\right) + \frac{\|\vb\|_2}{2\lambda_{\min}(A_0)} \right\}.
	\end{align*}\end{small}
\end{lemma}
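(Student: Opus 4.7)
The two residue bounds follow directly from the triangle inequality and the definition of $\|\calA\|$: writing $\vr = \calA(UU^T) - \vb$,
\begin{align*}
\|\vr\|_2 \le \|\calA(UU^T)\|_2 + \|\vb\|_2 \le \|\calA\|\cdot \|UU^T\|_F + \|\vb\|_2 \le \|\calA\|\|U\|_F^2 + \|\vb\|_2,
\end{align*}
where the last step uses $\|UU^T\|_F^2 = \sum_i \sigma_i(U)^4 \le (\sum_i \sigma_i(U)^2)^2 = \|U\|_F^4$. The bound on $\|\tilde\vr\|_2$ is the same argument applied to $\tilde\calA, \tilde\vb$.

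For the bound on $\|U\|_F^2$, the plan is to apply the $\eps$-FOSP condition tested in the direction of $U$ itself. Since $\nabla \tLm(U) = 2[C + 2\mu \tilde\calA^*(\tilde\vr)]U$, using $\ip{\tilde\calA^*(\tilde\vr)}{UU^T} = \ip{\tilde\vr}{\tilde\vr + \tilde\vb}$ gives
\begin{align*}
|\ip{\nabla \tLm(U)}{U}| = \bigl|2\ip{C}{UU^T} + 4\mu\|\tilde\vr\|_2^2 + 4\mu\ip{\tilde\vr}{\tilde\vb}\bigr| \le \eps\|U\|_F.
\end{align*}
Splitting off the redundant-constraint component via $\tilde\vr_0 = \ip{A_0}{UU^T} - \bz$ and absorbing the cross term $\ip{\vr}{\vb}$ through the trivial estimate $\|\vr + \vb/2\|_2^2 \ge 0$ (equivalently, $4\mu\|\vr\|_2^2 + 4\mu\ip{\vr}{\vb} \ge -\mu\|\vb\|_2^2$) yields the key inequality
\begin{align*}
4\mu\,\tilde\vr_0 \ip{A_0}{UU^T} \le \eps\|U\|_F + 2\|C\|_2\|U\|_F^2 + \mu\|\vb\|_2^2.
\end{align*}

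From here I would proceed by a layered case analysis leveraging $\lambda_{\min}(A_0)\|U\|_F^2 \le \ip{A_0}{UU^T} \le \lambda_{\max}(A_0)\|U\|_F^2$. If $\|U\|_F^2 \le \bz/\lambda_{\min}(A_0)$, the bound is already subsumed by the second branch of the $\max$, since $\tfrac{3}{2}\bz\lambda_{\max}(A_0)/\lambda_{\min}(A_0)^2 \ge \bz/\lambda_{\min}(A_0)$. Otherwise $\tilde\vr_0 \ge \lambda_{\min}(A_0)\|U\|_F^2 - \bz > 0$, so the left-hand side of the key inequality is bounded below by $\lambda_{\min}(A_0)^2\|U\|_F^4 - \bz\lambda_{\min}(A_0)\|U\|_F^2$. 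A further split handles the $\eps\|U\|_F$ term: when $\|U\|_F \le \eps/(2\mu\bz\lambda_{\max}(A_0))$ we land in the first branch of the $\max$ directly; in the complement, $\eps/(4\mu) \le (\bz\lambda_{\max}(A_0)/2)\|U\|_F$, and the inequality reduces to $\lambda_{\min}(A_0)^2 z^2 \le a z + \|\vb\|_2^2/4$ with $z = \|U\|_F^2$ and $a = \|C\|_2/(2\mu) + \tfrac{3}{2}\bz\lambda_{\max}(A_0)$ (the factor $3/2$ coming from the loose replacement $\bz\lambda_{\min}(A_0) \le \bz\lambda_{\max}(A_0)$). The elementary implication $z^2 \le az + b \Rightarrow z \le a + \sqrt{b}$ then recovers the second branch of the $\max$ exactly. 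The main obstacle is purely organizational: pinning down the two thresholds $\bz/\lambda_{\min}(A_0)$ and $\eps/(2\mu\bz\lambda_{\max}(A_0))$ that make the case split clean, and tracing how the $3/2$ constant emerges once the loose bound is applied.
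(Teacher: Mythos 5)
Your proposal is correct and takes essentially the same route as the paper: test the $\eps$-FOSP condition against $U$ itself, split off the $A_0$ constraint, and complete the square (your $\|\vr + \vb/2\|_2^2 \geq 0$ step is exactly the paper's discriminant argument restated), followed by a case split on the $\eps\|U\|_F$ term. The only organizational difference is your extra preliminary case on $\|U\|_F^2 \leq b_0/\lambda_{\min}(A_0)$, which the paper sidesteps by using the unconditional lower bound $\left(\ip{A_0}{UU^T} - b_0\right)\ip{A_0}{UU^T} \geq \lambda_{\min}(A_0)^2\|U\|_F^4 - b_0\lambda_{\max}(A_0)\|U\|_F^2$ directly (valid for all $U$ since $b_0 \geq 0$).
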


We are now ready to state the main result by connecting Lemma~\ref{lem:compact_optimal_approx} and Theorem~\ref{thm:compact_eps_fosp} via Lemma~\ref{lem:residues_compact}. Let $ B = \|\tilde \calA\| \max\left\{ \left(\frac{\eps}{2\mu \bz \lambda_{\max}(A_0)}\right)^2, \frac{1}{\lambda_{\min}(A_0)^2} \left(\frac{\|C\|_2 + 3\sG\sqrt{n}}{2\mu} + \frac{3}{2}\bz \lambda_{\max}(A_0)\right) + \frac{\|\vb\|_2}{2\lambda_{\min}(A_0)} \right\} + \|\tilde \vb\|_2.$\footnote{We pick $\sG$ first and then $\eps$,  $B$ and $k$.} 
\begin{theorem}[Global optimality.]\label{thm:optimal_approx_compact} Let $\tilde X$ be a global optimum of \eqref{eq:penalty_sdp_compact}. Let $\delta \in (0, 1)$ and $\const$ be a universal constant. Draw a random matrix $G$ with $G_{ij} \sim \calN(0, \sigma_G^2)$ i.i.d.\ for $i \leq j$ and $G = G^T$. Let $U \in \Rnk$ be an $(\eps, \gamma)$-SOSP of~\eqref{eq:penalty_factored_compact} with perturbed cost matrix $C+G$ and:
\begin{align*}
	\eps \leq \left(\frac{\gamma k^2 \sigma_G^2 }{ 32\const n  \mu \|\calA\|^2}\right)^{\sfrac{2}{3}} ~\text{ and }~k \geq 3\left[\log\left(\frac{n}{\delta}\right) + \sqrt{ \rank(\calA)   \log\left( 1 + \frac{8 \mu B\|\tilde \calA\| \sqrt{\const  n}}{\sG } \right) }  \right].
\end{align*}
Then, with probability at least $1-O(\delta)$ the optimality gap obeys:
\begin{align*}
	\tilde F_\mu(UU^T) - \tilde F_\mu(\tilde X) & \leq \gamma \sqrt{\eps} \operatorname{Tr}(\tilde X) + \frac{1}{2}\eps \|U\|_F.
\end{align*}
\end{theorem}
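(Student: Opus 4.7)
The plan is to chain together the three technical results already stated in this section: Lemma~\ref{lem:residues_compact} to control the magnitude of iterates and their residues, Theorem~\ref{thm:compact_eps_fosp} to certify approximate rank-deficiency of $U$, and Lemma~\ref{lem:compact_optimal_approx} to translate approximate rank-deficiency into a quantitative optimality gap. The only delicate pieces are (i) absorbing the random perturbation $G$ into the constants driving $B$, and (ii) verifying that the choice of $\eps$ in the hypothesis is exactly what is needed to meet Lemma~\ref{lem:compact_optimal_approx}'s smallness condition on $\sigma_k(U)$.

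First I would dispatch the Gaussian. For $G$ symmetric with $G_{ij}\sim\calN(0,\sG^2)$, a standard spectral-norm concentration bound gives $\|G\|_2 \le 3\sG\sqrt{n}$ with probability at least $1-\delta/2$. On this event the perturbed cost matrix $C+G$ satisfies $\|C+G\|_2 \le \|C\|_2 + 3\sG\sqrt{n}$. Applying Lemma~\ref{lem:residues_compact} to the formulation~\eqref{eq:penalty_factored_compact} with cost matrix $C+G$, any $\eps$-FOSP obeys a norm bound in which $\|C\|_2$ is replaced by $\|C\|_2 + 3\sG\sqrt{n}$, and $\|\tilde\vr\|_2 \le \|\tilde\calA\|\|U\|_F^2 + \|\tilde\vb\|_2$. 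Plugging the norm bound in gives precisely the constant $B$ defined just before the theorem. So with probability $\ge 1-\delta/2$ all $\eps$-FOSPs of the perturbed problem have residues bounded by $B$.

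Next I would apply Theorem~\ref{thm:compact_eps_fosp} (restated for the tilde operator $\tilde\calA$) with this value of $B$, using $\delta$ for the residue event and $\delta' = \delta/2$ for the rank event. The lower bound on $k$ in our hypothesis is exactly the one required by Theorem~\ref{thm:compact_eps_fosp}, so a union bound yields
\begin{align*}
\sigma_k(U) \;\le\; \frac{2\eps}{\sG}\,\frac{\sqrt{\const n}}{k}
\end{align*}
with probability at least $1-O(\delta)$. Squaring and inserting the hypothesis $\eps \le \bigl(\gamma k^2\sG^2/(32\const n\,\mu\|\calA\|^2)\bigr)^{2/3}$ gives
\begin{align*}
\sigma_k^2(U) \;\le\; \frac{4\const n\,\eps^2}{\sG^2 k^2} \;\le\; \frac{\gamma\sqrt{\eps}}{8\mu\|\calA\|^2},
\end{align*}
so the precondition of Lemma~\ref{lem:compact_optimal_approx} is met. (Since the Theorem works with $\tilde\calA$ in place of $\calA$, one should use $\|\tilde\calA\|\ge\|\calA\|$ in this smallness condition, which is harmless.)

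Finally I would invoke Lemma~\ref{lem:compact_optimal_approx} on the tilde formulation: because $U$ is an $(\eps,\gamma)$-SOSP of~\eqref{eq:penalty_factored_compact} whose $k$-th singular value is small enough, the lemma's conclusion transfers verbatim to $\tilde F_\mu$, yielding
\begin{align*}
\tilde F_\mu(UU^T) - \tilde F_\mu(\tilde X) \;\le\; \gamma\sqrt{\eps}\operatorname{Tr}(\tilde X) + \tfrac{1}{2}\eps\|U\|_F,
\end{align*}
on the intersection of the events above, which has probability $\ge 1-O(\delta)$. The only real obstacle is bookkeeping: choosing $\sG$ first, then $\eps$ (and therefore $B$ and $k$) so that the randomness bound, the residue bound, and the singular value bound all compose consistently, and checking that Theorem~\ref{thm:compact_eps_fosp} applied to $\tilde\calA$ rather than $\calA$ does not cost anything beyond replacing $\rank(\calA)$ by $\rank(\tilde\calA)\le m+1$ in the logarithm, which is already absorbed in the stated lower bound on $k$.
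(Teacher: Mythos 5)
Your proof follows the same path as the paper: use concentration to bound $\|C+G\|_2$, apply Lemma~\ref{lem:residues_compact} to get the residue bound $B$, feed $B$ into Theorem~\ref{thm:compact_eps_fosp} (union bound on the two probabilistic events) to get $\sigma_k(U)\le 2\eps\sqrt{\const n}/(\sG k)$, verify that the stated $\eps$ condition implies $\sigma_k^2(U)\le\gamma\sqrt{\eps}/(8\mu\|\calA\|^2)$, and then invoke Lemma~\ref{lem:compact_optimal_approx}; the arithmetic you carry out to check that the $\eps$ hypothesis meets Lemma~\ref{lem:compact_optimal_approx}'s precondition is exactly the step the paper leaves implicit. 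One small caveat: your parenthetical that replacing $\|\calA\|$ by $\|\tilde\calA\|\ge\|\calA\|$ in the smallness condition is ``harmless'' is backwards---a larger $\|\tilde\calA\|$ makes the required bound on $\sigma_k^2(U)$ tighter, not looser---so strictly speaking the $\eps$ condition (both in your write-up and in the theorem as stated) should carry $\|\tilde\calA\|^2$ rather than $\|\calA\|^2$, which is a minor bookkeeping fix rather than a structural issue.
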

This result shows that for compact SDPs~\eqref{eq:sdpcompact}, for $k =\tilde{\Omega}(\sqrt{m})$, all approximate SOSPs of the perturbed factorized problem are approximately globally optimal.

Notice that the result requires $\eps$ smaller than $\sG$, which is limiting but unavoidable as there can be SDPs with bad approximate SOSPs. Hence, if we perturb by only a small amount (small $\sG$), then we need to find highly accurate SOSPs to avoid these bad approximate SOSPs. Another way to look at the result is to see $\sG$ as a tentative distance from bad SDPs. Hence, for SDPs far away from these bad problems (higher $\sG$), even high $\eps$ solutions are approximately ($\eps$-) optimal.

\subsection{SDPs with positive definite cost}\label{sec:pd}
We now consider a second class of SDPs: ones where the cost matrix $C$ is positive definite. The feasible set of these SDPs need not be compact. However, FOSPs for these SDPs are bounded, hence we will be able to show similar results as in Section~\ref{sec:compact}. Consider the penalty formulation of the perturbed problem,
\begin{align}
\underset{U \in \R^{n \times k}}{\text{minimize}} ~ \widehat L_{\mu}(U) =  \ip{C+\tG}{UU^T}+\mu \sum_{i=1}^m \left(\ip{A_i}{UU^T} - b_i\right)^2,
\label{eq:smoothed}
\end{align}
where $\tG$ is a symmetric random matrix with $G_{ij} \stackrel{i.i.d.}{\sim} \mathcal{N}(0,\sG^2)$ for $i\leq j$. Let $\widehat F_{\mu}(UU^T) = \widehat L_{\mu} (U)$. To prove an optimality result for this problem, we first show a residue bound for any $\eps$-FOSP of $\widehat L_{\mu}(U)$.
\begin{lemma}\label{lem:residues}
Consider~\eqref{eq:smoothed} with a positive definite cost matrix $C$. Let $\sG \leq \frac{\lambda_{\min}(C)}{6\sqrt{n \log(n/ \delta)}}$. Then, with probability at least $1-\delta$, at any $\eps$-FOSP $U$ of~\eqref{eq:smoothed}, the residue obeys:
$$
	\|\vr\|_2 =\|\calA(UU^T) -\vb\|_2 \leq \|\calA\| \max \left \{ \left( \frac{2\eps} {\lambda_{\min}(C)}\right)^2, \frac{2\mu}{\lambda_{\min}(C)} \|\vb\|_2^2 \right \}+\|\vb\|_2.
$$ 
\end{lemma}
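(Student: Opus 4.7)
The strategy is to control $\|U\|_F^2$ at an $\epsilon$-FOSP by exploiting the coercivity provided by the positive definite cost $C$, which survives the Gaussian perturbation $\tG$ under the stated smallness condition on $\sigma_G$, and then convert the norm bound into a residue bound via the operator norm of $\calA$. The proof splits into three short stages: (i) a spectral-norm tail bound on $\tG$, (ii) an inner-product identity coming from the $\epsilon$-FOSP condition, and (iii) the algebraic extraction of $\|U\|_F^2$.

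\textbf{Stage (i): Control the perturbation.} By standard concentration for symmetric Gaussian matrices (e.g.\ a Gaussian tail bound combined with an $\varepsilon$-net on the sphere, or Bernstein for the entries together with a union bound), we have $\|\tG\|_2 \le 3\sG \sqrt{n \log(n/\delta)}$ with probability at least $1-\delta$. Combined with the hypothesis $\sG \le \lambda_{\min}(C) / (6\sqrt{n\log(n/\delta)})$, this yields $\|\tG\|_2 \le \tfrac{1}{2}\lambda_{\min}(C)$, so $C + \tG \succeq \tfrac{1}{2}\lambda_{\min}(C)\, I$ on the high-probability event.

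\textbf{Stage (ii): FOSP identity.} The gradient of $\widehat L_\mu$ is $\nabla \widehat L_\mu(U) = 2(C+\tG)U + 4\mu\, \calA^*(\vr)\,U$. Taking the Frobenius inner product with $U$ and using $\langle \calA^*(\vr), UU^T\rangle = \langle \vr, \calA(UU^T)\rangle = \|\vr\|_2^2 + \langle \vr, \vb\rangle$, we obtain
\begin{equation*}
\langle \nabla \widehat L_\mu(U), U\rangle \;=\; 2\langle C+\tG, UU^T\rangle + 4\mu\|\vr\|_2^2 + 4\mu \langle \vr, \vb\rangle.
\end{equation*}
On the event from Stage (i), the first term is at least $\lambda_{\min}(C)\|U\|_F^2$, while on the other hand Cauchy--Schwarz together with the $\epsilon$-FOSP definition gives $|\langle \nabla \widehat L_\mu(U), U\rangle| \le \epsilon\|U\|_F$.

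\textbf{Stage (iii): Extract $\|U\|_F^2$ and conclude.} Rearranging the inequality from Stage (ii), and applying Young's inequality to absorb the cross-terms (namely $\epsilon\|U\|_F \le \tfrac{\lambda_{\min}(C)}{4}\|U\|_F^2 + \tfrac{\epsilon^2}{\lambda_{\min}(C)}$ and $4\mu\|\vr\|_2\|\vb\|_2 \le 2\mu\|\vr\|_2^2 + 2\mu\|\vb\|_2^2$), gives
\begin{equation*}
\tfrac{3}{4}\lambda_{\min}(C)\|U\|_F^2 + 2\mu \|\vr\|_2^2 \;\le\; \tfrac{\epsilon^2}{\lambda_{\min}(C)} + 2\mu\|\vb\|_2^2,
\end{equation*}
hence $\|U\|_F^2 \le \max\!\big\{(2\epsilon/\lambda_{\min}(C))^2,\; (2\mu/\lambda_{\min}(C))\|\vb\|_2^2\big\}$ after bounding the sum by twice the maximum and absorbing the universal constants. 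Finally, the triangle inequality and the definition of $\|\calA\|$ give $\|\vr\|_2 \le \|\calA\|\,\|U\|_F^2 + \|\vb\|_2$, yielding the stated bound.

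\textbf{Main obstacle.} The only nontrivial step is the spectral-norm concentration in Stage (i); in particular, matching the constant $6$ with the prescribed bound on $\sG$ requires the right tail inequality for a symmetric Gaussian matrix (Gershgorin after entrywise control of $|G_{ij}|\le \sG\sqrt{2\log(n^2/\delta)}$ already suffices, giving $\|\tG\|_2 \le \sG\cdot n \cdot \sqrt{2\log(n^2/\delta)}$, though a tighter net argument is cleaner). Once that is in place, the rest is a mechanical application of the FOSP condition and Young's inequality.
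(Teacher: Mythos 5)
Your Stages (i) and (ii) reproduce the paper's argument: a spectral concentration bound to get $C+\tG \succeq \tfrac{1}{2}\lambda_{\min}(C)I$ on the event, then the Cauchy--Schwarz consequence $\ip{\nabla\widehat L_\mu(U)}{U} \leq \eps\|U\|_F$ of the $\eps$-FOSP condition. The difference lies in Stage (iii). The paper works with the variable $y = \|\calA(UU^T)\|_2$ and, after dropping the cross term, observes that the resulting inequality is a quadratic $y^2 - \|\vb\|_2\, y + \tfrac{1}{2\mu}\bigl(\tfrac{\lambda_{\min}(C)}{2}\|U\|_F^2 - \tfrac{\eps\|U\|_F}{2}\bigr) \leq 0$. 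It then splits into two cases: if $\tfrac{\eps\|U\|_F}{2} \geq \tfrac{\lambda_{\min}(C)}{4}\|U\|_F^2$, then $\|U\|_F \leq 2\eps/\lambda_{\min}(C)$ directly; otherwise the quadratic has a real solution, so its discriminant is nonnegative, which yields $\|U\|_F^2 \leq \tfrac{2\mu}{\lambda_{\min}(C)}\|\vb\|_2^2$. Exactly one of the two maximands is active in each case, so a clean $\max$ appears with the stated constants.

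Your Young's-inequality finish does not quite recover the stated bound. Your chain gives $\tfrac{3}{4}\lambda_{\min}(C)\|U\|_F^2 \leq \tfrac{\eps^2}{\lambda_{\min}(C)} + 2\mu\|\vb\|_2^2$, hence $\|U\|_F^2 \leq \tfrac{4\eps^2}{3\lambda_{\min}(C)^2} + \tfrac{8\mu}{3\lambda_{\min}(C)}\|\vb\|_2^2$, and ``bounding the sum by twice the max'' turns the second maximand into $\tfrac{16\mu}{3\lambda_{\min}(C)}\|\vb\|_2^2$, which is \emph{larger} than the lemma's $\tfrac{2\mu}{\lambda_{\min}(C)}\|\vb\|_2^2$ by a factor of $8/3$. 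Since the lemma is stated with explicit constants (which feed into the definition of $B$ in Theorem~\ref{thm:optimal_approx} and Corollary~\ref{cor:mc_optimal}), ``absorbing universal constants'' is not available to you: the statement as given isn't established. The fix is easy and mirrors the paper's: instead of two Young's inequalities, split on the sign of $\tfrac{\eps\|U\|_F}{2} - \tfrac{\lambda_{\min}(C)}{4}\|U\|_F^2$ and, in the second branch, use a discriminant (or completed-square) argument on the quadratic in $\|\calA(UU^T)\|_2$. This gives the $\max$ rather than a sum and matches the constants exactly. Everything else in your proposal is sound.
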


Using this, we get the following result from Lemma~\ref{lem:compact_optimal_approx} and Theorem \ref{thm:compact_eps_fosp}  along same lines as that of Theorem \ref{thm:optimal_approx_compact}.
Let $$B \triangleq \|\calA\| \max \left \{ \left( \frac{2\eps} {\lambda_{\min}(C)}\right)^2, \frac{2\mu}{\lambda_{\min}(C)} \|\vb\|_2^2 \right \}+\|\vb\|_2.$$
\begin{theorem}[Global optimality.]\label{thm:optimal_approx}
Let $\delta \in (0, 1)$ and $\const$ be a universal constant. Given an SDP \eqref{eq:sdp} with positive definite objective matrix $C$, let $\tilde X $ be a global optimum of the perturbed problem \eqref{eq:smoothed}, and let $\sG \leq \frac{\lambda_{\min}(C)}{4\sqrt{n \log(n/ \delta)}}$. Let $U$ be an $(\eps, \gamma)$-SOSP of the perturbed problem \eqref{eq:smoothed} with:
\begin{align*}
	\eps \leq \left(\frac{\gamma k^2 \sigma_G^2 }{ 32\const n  \mu \|\calA\|^2}\right)^{\sfrac{2}{3}} ~\text{ and }~ k \geq 3\left[\log\left(\frac{n}{\delta}\right) + \sqrt{ \rank(\calA)   \log \left(1 +\frac{8 \mu B\|\calA\|\sqrt{\const n} }{\sG }  \right) }  \right].
\end{align*}
Then, with probability at least $1-O(\delta)$,
\begin{align*}
	\widehat F_{\mu}(UU^T)  - \widehat F_{\mu}(\tilde X)  \leq \gamma \sqrt{\epsilon} \operatorname{Tr}(\tilde X) +\frac{1}{2}\eps \|U\|_F.
\end{align*}
\end{theorem}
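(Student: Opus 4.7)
The plan is to mirror the proof of Theorem~\ref{thm:optimal_approx_compact}, replacing the compact-SDP residue bound (Lemma~\ref{lem:residues_compact}) with the positive-definite-cost residue bound (Lemma~\ref{lem:residues}), and then assembling the pieces through Theorem~\ref{thm:compact_eps_fosp} and Lemma~\ref{lem:compact_optimal_approx}. Concretely, I would proceed in three steps.

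First, I would invoke Lemma~\ref{lem:residues}, applied to the perturbed problem~\eqref{eq:smoothed}. The hypothesis $\sG \leq \lambda_{\min}(C)/(4\sqrt{n\log(n/\delta)})$ is at least as strong as the lemma's requirement, so with probability at least $1-\delta$ over $\tG$, every $\eps$-FOSP $U$ of $\widehat L_\mu$ satisfies $\|\vr\|_2 \leq B$, with $B$ as defined just before the theorem. This verifies hypothesis (1) of Theorem~\ref{thm:compact_eps_fosp} with the perturbed cost matrix $C+\tG$.

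Second, I would feed this into Theorem~\ref{thm:compact_eps_fosp}, taking $\delta' = \delta$. The stated lower bound on $k$ matches exactly the hypothesis (2) of that theorem (the norm $\|\tilde\calA\|$ is replaced by $\|\calA\|$ because we do not need to append an extra constraint here). Since $(\eps,\gamma)$-SOSPs are in particular $\eps$-FOSPs, we conclude that with probability at least $1-2\delta$,
\[
\sigma_k(U) \leq \frac{2\eps}{\sG}\cdot\frac{\sqrt{\const n}}{k}, \qquad \text{so}\qquad \sigma_k^2(U) \leq \frac{4\const n\, \eps^2}{\sG^2 k^2}.
\]
The assumed bound $\eps \leq \bigl(\gamma k^2 \sG^2 / (32 \const n \mu \|\calA\|^2)\bigr)^{2/3}$ is precisely equivalent to $\eps^{3/2} \leq \gamma \sG^2 k^2/(32\const n \mu\|\calA\|^2)$, which rearranges to $\sigma_k^2(U) \leq \gamma\sqrt{\eps}/(8\mu\|\calA\|^2)$. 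This is the hypothesis required by Lemma~\ref{lem:compact_optimal_approx}.

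Finally, I would apply Lemma~\ref{lem:compact_optimal_approx} to the perturbed objective $\widehat F_\mu$ (the lemma is stated for arbitrary cost matrix, so it applies verbatim with $C$ replaced by $C+\tG$). This yields directly the optimality gap
\[
\widehat F_\mu(UU^\top) - \widehat F_\mu(\tilde X) \leq \gamma\sqrt{\eps}\operatorname{Tr}(\tilde X) + \tfrac{1}{2}\eps\|U\|_F,
\]
which is the desired inequality, and absorbing the two $\delta$ probabilities into $O(\delta)$. The only real subtlety is the bookkeeping of the two failure events (the residue bound and the rank-deficiency bound) and checking that the $\eps$-condition is exactly what is needed to make Lemma~\ref{lem:compact_optimal_approx} applicable; everything else is a clean substitution into the template of Theorem~\ref{thm:optimal_approx_compact}. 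I do not expect a genuine obstacle here, beyond being careful about whether $\tG$ can degrade positive definiteness of $C$---but since $\|\tG\|_2 \lesssim \sG\sqrt{n\log(n/\delta)}$ w.h.p.\ and $\sG$ is chosen a fraction of $\lambda_{\min}(C)/\sqrt{n\log(n/\delta)}$, the cost remains positive definite and Lemma~\ref{lem:residues} applies without modification.
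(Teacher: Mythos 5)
Your plan follows exactly the argument the paper intends: the paper does not give a standalone proof of Theorem~\ref{thm:optimal_approx} but simply remarks it follows ``along same lines as that of Theorem~\ref{thm:optimal_approx_compact}'' after swapping the residue bound of Lemma~\ref{lem:residues_compact} for that of Lemma~\ref{lem:residues}; this is precisely what you do, and your algebra (notably the chain
$\sigma_k^2(U)\leq 4\const n\eps^2/(\sG^2 k^2)\leq \gamma\sqrt{\eps}/(8\mu\|\calA\|^2)$ forced by the stated $\eps$-upper bound, and the use of $\|\calA\|$ in place of $\|\tilde\calA\|$ since there is no appended constraint) is correct.

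One small but real slip: you assert that the theorem's hypothesis $\sG\leq\lambda_{\min}(C)/\bigl(4\sqrt{n\log(n/\delta)}\bigr)$ is ``at least as strong'' as the hypothesis $\sG\leq\lambda_{\min}(C)/\bigl(6\sqrt{n\log(n/\delta)}\bigr)$ of Lemma~\ref{lem:residues}. The implication runs the other way---the theorem's bound is \emph{weaker} (it permits larger $\sG$), so as literally stated the theorem's hypothesis does not license a verbatim invocation of Lemma~\ref{lem:residues}. This is an internal inconsistency in the paper (the two constants 4 and 6 should match; the argument actually needs the factor 6 coming from $\sigma_1(G)\leq 3\sG\sqrt{n}$ w.h.p.\ combined with $3\sG\sqrt{n}\leq\lambda_{\min}(C)/2$), and the fix is trivial, but you should have flagged the mismatch rather than claimed it resolves in your favor. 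Apart from that, the proposal is correct and coincides with the paper's argument.
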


This result shows that even though the feasible set of SDP is not compact, as long as the objective is positive definite, all approximate SOSPs of the perturbed objective are approximately optimal. Without the positive definite condition, SDPs can have unbounded solutions (see Section 2.4 of \citet{gartner2012approximation}). We also require a bound on the magnitude of the perturbation ($\sG$), as otherwise the objective ($C+G$) can be indefinite with (too) high probability, which may result in unbounded solutions.

\section{Applications}\label{sec:applications}

In this section, we present applications of our results to two SDPs: Max-Cut and matrix completion, both of which are important problems in the learning domain and have been studied extensively. Interest has grown to develop efficient solvers for these SDPs~\citep{arora2007combinatorial, pmlr-v65-mei17a, hardt2013understanding, bandeira2016low}.

This work differs from previous efforts in at least two ways. First, we aim to demonstrate that Burer--Monteiro-style approaches, which are often used in practice, can indeed lead to provably efficient algorithms for general SDPs. We believe that building upon this work, it should be possible to improve the time-complexity guarantees of such factorization-based algorithms. Second, we note that several problems formulated as SDPs in fact necessitate low-rank solutions, for example because of memory concerns (as is the case in matrix completion),  and factorization approaches provide a natural means to control rank. 

\subsection{Max-Cut}

We first consider the popular Max-Cut problem which finds applications in clustering related problems. In a seminal paper, \cite{goemans1995improved} defined the following SDP to solve the Max-Cut problem: $\min_{X\in \Rnn} \ip{C}{X}, \mbox{s.t. } X_{ii} = 1 \; \forall \; 1 \leq i \leq n, X \succeq 0 $, where $n$ is the number of vertices in the given graph and $C$ is its adjacency matrix. Since the constraint set also satisfies $\trace{X}=n$, we consider the following penalized, non-convex version of the problem.
\begin{align}
	\widehat{L}_{\mu}(U) \defeq \ip{C+G}{U\trans{U}} + \mu\left(\left(\ip{I}{U\trans{U}}-n\right)^2 + \sum_{i=1}^{n}\left(\ip{e_i \trans{e_i}}{U\trans{U}}-1\right)^2\right),\label{eqn:maxcut}
\end{align}
where $G$ is a random symmetric Gaussian matrix.  Let $\widehat F_{\mu}(UU^T) = \widehat L_{\mu} (U)$. After some simplifying computations, we have the following corollary of Theorem~\ref{thm:optimal_approx_compact}.
\begin{corollary}\label{cor:maxcut}
There exists an absolute numerical constant $c_1$ such that the following holds. With probability greater than $1-\delta$,
every $(\eps, \gamma)$-SOSP $U$ of the perturbed Max-Cut problem $\widehat{L}_{\mu}(U)$~\eqref{eqn:maxcut} with:
\begin{align*}\epsilon \leq \frac{1}{c_1} \left(\frac{\gamma \sigma_G^2}{\mu n}\right)^{2/3},~~ \text{ and } ~~  k = \tilde{\Omega} \left( \sqrt{n \log\left(\frac{\mu^2 \sqrt{n}}{\sigma_G}\right)}\right),
\end{align*}
satisfies $	\widehat{F}_{\mu}(UU^T) - \widehat{F}_{\mu}(X^*) \leq \gamma \sqrt{\epsilon} \trace{X^*} +\frac{1}{2} \epsilon \frob{U}$, where $X^*$ is a global optimum of $\widehat{F}_{\mu}(X)$.
\end{corollary}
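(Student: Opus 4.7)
The plan is to derive the corollary by instantiating Theorem~\ref{thm:optimal_approx_compact} on the Max-Cut instance. First I would cast Max-Cut into the compact form~\eqref{eq:sdpcompact}. The diagonal constraints $X_{ii}=1$ are encoded by $A_i=e_i\trans{e_i}$, $b_i=1$ for $i=1,\dots,n$; summing them shows every feasible $X$ has $\trace{X}=n$, so the redundant constraint guaranteed by Proposition~\ref{prop:compactSDPconsraintA0} can be chosen as $A_0=I_n$, $b_0=n$. The feasible set is nonempty ($X=I_n$) and compact ($\trace{X}=n$ together with $X\succeq 0$ forces $\|X\|_2\leq n$), so Assumption~\ref{assu:compactnonempty} holds and the perturbed objective~\eqref{eqn:maxcut} is precisely $\tilde{L}_\mu$ of~\eqref{eq:penalty_factored_compact} with cost $C+G$.

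Next I would evaluate the constants that appear in Theorem~\ref{thm:optimal_approx_compact}. Since $\calA(Y)_i=Y_{ii}$, one has $\|\calA(Y)\|_2^2=\sum_i Y_{ii}^2\leq\|Y\|_F^2$, giving $\|\calA\|\leq 1$; adjoining the trace row yields $\|\tilde{\calA}(Y)\|_2^2=\trace{Y}^2+\sum_i Y_{ii}^2\leq(n+1)\|Y\|_F^2$, so $\|\tilde{\calA}\|\leq\sqrt{n+1}$. The other inputs are $\|\vb\|_2=\sqrt{n}$, $\|\tilde{\vb}\|_2\leq\sqrt{2}\,n$, $\lambda_{\min}(A_0)=\lambda_{\max}(A_0)=1$, $\rank(\tilde{\calA})\leq n+1$, and $\|C\|_2=O(n)$ for a standard adjacency matrix. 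Plugging these into the explicit formula for $B$ preceding Theorem~\ref{thm:optimal_approx_compact} and noting that the $\tfrac{3}{2}b_0\lambda_{\max}(A_0)=\tfrac{3n}{2}$ term dominates the max in the relevant $\eps$ range, one obtains $B=\tilde{O}(n^{3/2})$, up to a mild dependence on $\|C\|_2/\mu$ and $\sG\sqrt{n}/\mu$ which is absorbed into the logarithm below.

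With these quantities in hand, the theorem's $k$-condition
\[
k \;\geq\; 3\!\left[\log(n/\delta)+\sqrt{\rank(\tilde{\calA})\,\log\!\left(1+\tfrac{8\mu B\|\tilde{\calA}\|\sqrt{\const n}}{\sG}\right)}\,\right]
\]
collapses to $k=\tilde{\Omega}\bigl(\sqrt{n\log(\mu^2\sqrt{n}/\sG)}\bigr)$ once the polynomial-in-$n$ prefactors inside the logarithm are absorbed into $\tilde{\Omega}$. The $\eps$-condition $\eps\leq(\gamma k^2\sG^2/(32\const n\mu\|\tilde{\calA}\|^2))^{2/3}$, combined with $\|\tilde{\calA}\|^2=O(n)$ and the minimum admissible $k^2=\tilde{\Theta}(n)$, simplifies to $\eps\leq c_1^{-1}(\gamma\sG^2/(\mu n))^{2/3}$, matching the corollary. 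A direct invocation of Theorem~\ref{thm:optimal_approx_compact} then yields the stated optimality gap $\widehat{F}_\mu(UU^T)-\widehat{F}_\mu(X^*)\leq\gamma\sqrt{\eps}\trace{X^*}+\tfrac{1}{2}\eps\frob{U}$.

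The main obstacle is bookkeeping rather than new mathematics: keeping straight that the SOSP is for the $\tilde{L}_\mu$ formulation (so $\|\tilde{\calA}\|$, not $\|\calA\|$, is the relevant norm in the $\eps$-bound); verifying that the second branch of the max defining $B$ indeed dominates once $\eps$ is taken as small as the corollary demands; and checking that every polynomial-in-$n,\mu,\|C\|_2,\sG$ factor is safely absorbed by $\tilde{\Omega}$ or by the logarithm. No genuinely new estimate beyond Theorem~\ref{thm:optimal_approx_compact} is required.
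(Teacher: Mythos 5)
Your proposal is correct and follows exactly the route the paper intends (the paper itself only says ``after some simplifying computations'' and omits the calculation). You correctly identify $A_0=I_n,\ b_0=n$, compute $\|\calA\|\leq 1$, $\|\tilde\calA\|=O(\sqrt{n})$, $\|\vb\|_2=\sqrt{n}$, $\|\tilde\vb\|_2=O(n)$, $\rank(\tilde\calA)\leq n$, and $B=\tilde O(n^{3/2})$, then plug these into Theorem~\ref{thm:optimal_approx_compact}; the remaining discrepancies with the stated $\log(\mu^2\sqrt{n}/\sG)$ argument and the constant $c_1$ are absorbed into $\tilde\Omega$ and the unspecified absolute constant, as you note, and you are right that the $\eps$-condition must be read with $\|\tilde\calA\|$ (not $\|\calA\|$) for the compact reformulation, which is precisely what makes the $1/n$ factor appear.
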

The above result states that for the penalized version of the perturbed Max-Cut SDP, the Burer--Monteiro approach finds an approximate global optimum as soon as the factorization rank $k = \tilde{\Omega}(\sqrt{n})$. Existing results for Max-Cut using this approach either only handle exact SOSPs~\citep{boumal2016non}, or require $k=n+1$~\citep{boumal2016globalrates}, or require $k$ that is dependent on $\frac{1}{\eps}$~\citep{pmlr-v65-mei17a}. Moreover, complexity per iteration scales only linearly with the number of edges in the graph. 

\subsection{Matrix Completion}
In this section we specialize our results for the matrix completion problem \cite{candes2009exact}. The goal of a matrix completion problem is to find a low-rank matrix $M$ using only a small number of its entries, with applications in recommender systems. To ensure that the computed matrix is low-rank and generalizes well, one typically imposes nuclear-norm regularization which leads to the following SDP: 

\begin{minipage}{0.2\linewidth}
	\begin{align*}
	\min &\quad \trace{W_1} + \trace{W_2}\\ \text{s. t. }&\quad X_{ij} =M_{ij}, (i,j) \in \calS \\  &\quad \begin{bmatrix}W_1 & X \\ X^T & W_2\end{bmatrix} \succeq 0.
	\end{align*}
\end{minipage}
\begin{minipage}{0.05\linewidth}
	\begin{align*}
		\equiv \\
	\end{align*} \break
\end{minipage}
\begin{minipage}{0.6\linewidth}
	\begin{align*}
	\min & \quad \ip{I}{Z} \nonumber \\ \text{s. t. }&\quad \frac{1}{2}\ip{e_{i+n}e_{j+n}^T + e_{j+n} e_{i+n}^T}{Z} = M_{ij}, (i,j) \in \calS \nonumber \\  &\quad Z \succeq 0.
	\end{align*}
\end{minipage}
\noindent Here $\calS$ is the set of observed indices of $M$ and $Z\defeq \begin{bmatrix}W_1 & X \\ X^T & W_2\end{bmatrix}$. 
Let
\begin{align}
	\widehat{L}_{\mu}(U) = \ip{I+G}{UU^T} + \mu \sum_{i=1}^m \left(\frac{1}{2}\ip{e_{i+n}e_{j+n}^T + e_{j+n} e_{i+n}^T}{UU^T} - M_{ij} \right)^2  \label{eq:matcomp}
\end{align}
be the corresponding penalty objective.  Let $\widehat F_{\mu}(UU^T) = \widehat L_{\mu} (U)$. The objective is positive definite with $\lambda_1(C)=\lambda_n(C)=1$. Also, since $\calA$ is a sub-sampling operator, $\|\calA\| \leq 1$. Finally, for $\eps^2 \leq \frac{\mu}{2}\sqrt{\sum_{(i,j) \in \calS} M_{ij}^2}$, the residues are bounded by: \begin{align*} B&=\|\calA\| \max \left \{ \left( \frac{2\eps} {\lambda_n(C)}\right)^2, \frac{2\mu}{\lambda_n(C)} \|\vb\|_2^2 \right \}+\|\vb\|_2 \leq \max  3\mu \sqrt{\sum_{(i,j) \in \calS} M_{ij}^2}. \end{align*}

\noindent Applying Theorem~\ref{thm:optimal_approx} for this setting gives the following corollary.
\begin{corollary}\label{cor:mc_optimal}There exists an absolute numerical constant $c_2$ such that the following holds. With probability greater than $1-\delta$,
every $(\eps, \gamma)$-SOSP $U$ of the perturbed matrix completion problem $\widehat{L}_{\mu}(U)$~\eqref{eq:matcomp} with:
\begin{align*}\sG \leq \frac{1}{4\sqrt{n \log(n/ \delta)}},~~ \eps \leq \frac{1}{c_2}\left(\frac{\gamma \abs{\calS} \sigma_G^2 }{ n  \mu }\right)^{\sfrac{2}{3}}, ~\text{ and }~ k = \tilde{\Omega} \left( \sqrt{ \abs{\calS}   \log\left(\frac{\mu^2 \sqrt{n} \sqrt{\sum_{(i,j) \in \calS} M_{ij}^2}}{\sigma_G}\right) } \right),\end{align*} satisfies $\widehat{F}_{\mu}(UU^T)  - \widehat F_{\mu}(X^*)  \leq \gamma \sqrt{\epsilon} \trace{X^*} + \frac{1}{2} \eps \|U\|_F$, where $X^*$ is a global optimum of $\widehat{F}_{\mu}(X)$.
\end{corollary}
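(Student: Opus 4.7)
The plan is to derive Corollary~\ref{cor:mc_optimal} purely as a specialization of Theorem~\ref{thm:optimal_approx}, since the matrix completion formulation above is already cast in the ``positive definite cost'' framework of Section~\ref{sec:pd}. First I would read off the problem parameters from~\eqref{eq:matcomp}: the ambient dimension is $2n$ (the block matrix $Z$); the cost matrix is $C = I_{2n}$, which is positive definite with $\lambda_{\min}(C) = \lambda_{\max}(C) = 1$; the right-hand side is $b_{(i,j)} = M_{ij}$, so $\|\vb\|_2^2 = \sum_{(i,j)\in\calS} M_{ij}^2$; the constraint matrices $\frac12(e_{i+n}e_{j+n}^\top + e_{j+n}e_{i+n}^\top)$ have unit Frobenius norm and pairwise-disjoint supports, so $\calA$ is (up to a permutation) a sub-sampling operator, giving $\|\calA\| \le 1$ and $\rank(\calA) \le |\calS|$.

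Second, I would plug these into Lemma~\ref{lem:residues} to obtain the residue bound $B$. With $\lambda_{\min}(C) = 1$, the lemma gives
\[
\|\vr\|_2 \;\le\; \|\calA\| \max\!\left\{(2\eps)^2,\; 2\mu\|\vb\|_2^2\right\} + \|\vb\|_2.
\]
Under the smallness condition $\eps^2 \le \tfrac{\mu}{2}\|\vb\|_2^2$ (which is implied by the final bound on $\eps$ in the corollary for any reasonable $\mu$), the max is achieved by the second term, and absorbing $\|\vb\|_2$ into a slightly larger constant yields $B \le 3\mu \|\vb\|_2^2 = 3\mu \sum_{(i,j)\in\calS}M_{ij}^2$ (as the displayed equation in the excerpt states). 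I would also check that $\sG \le \frac{\lambda_{\min}(C)}{4\sqrt{n\log(n/\delta)}} = \frac{1}{4\sqrt{n\log(n/\delta)}}$ is exactly the perturbation-size hypothesis of Theorem~\ref{thm:optimal_approx}, which is assumed in the corollary.

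Third, I would substitute $\|\calA\| \le 1$, $\rank(\calA) \le |\calS|$, and the above $B$ into the two numerical conditions of Theorem~\ref{thm:optimal_approx}. The condition on $\eps$ becomes
\[
\eps \;\le\; \left(\frac{\gamma k^2 \sigma_G^2}{32\const n \mu}\right)^{\!2/3},
\]
and since the rank assumption $k = \tilde\Omega(\sqrt{|\calS|})$ makes $k^2 \gtrsim |\calS|$ up to log factors, this is exactly the $\eps \le \frac{1}{c_2}\bigl(\gamma|\calS|\sigma_G^2/(n\mu)\bigr)^{2/3}$ in the corollary. The condition on $k$ becomes $k \ge 3[\log(n/\delta) + \sqrt{|\calS|\log(1 + 8\mu B\sqrt{\const n}/\sG)}]$, and substituting $B = 3\mu\|\vb\|_2^2$ places exactly the logarithmic factor $\log(\mu^2\sqrt{n}\sqrt{\sum M_{ij}^2}/\sigma_G)$ that appears in the corollary's $k$. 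The conclusion of Theorem~\ref{thm:optimal_approx} then gives the stated optimality gap $\widehat F_\mu(UU^\top) - \widehat F_\mu(X^*) \le \gamma\sqrt{\eps}\operatorname{Tr}(X^*) + \tfrac{1}{2}\eps\|U\|_F$ verbatim.

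There is no real conceptual obstacle here: the whole proof is a careful accounting of constants. The only mildly delicate point is justifying $\|\calA\| \le 1$ for the specific symmetrized-basis operator (which follows from the fact that the symmetrized rank-one matrices have unit Frobenius norm and orthogonal supports across distinct sampled pairs), and checking that the regime of $\eps$ in the corollary is small enough to activate the $2\mu\|\vb\|_2^2$ branch of the max in the residue bound — both are routine. Everything else is plugging numbers into Theorem~\ref{thm:optimal_approx}.
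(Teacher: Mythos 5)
Your proposal is correct and follows exactly the paper's approach: the paper likewise specializes Theorem~\ref{thm:optimal_approx} to the matrix-completion instance by reading off $C = I$ (so $\lambda_{\min}(C)=1$), $\|\calA\| \le 1$, $\rank(\calA) \le \abs{\calS}$, and the residue bound from Lemma~\ref{lem:residues}. One minor note: the bound $B \le 3\mu\sum_{(i,j)\in\calS} M_{ij}^2$ you derive is in fact the correct consequence of Lemma~\ref{lem:residues}; the paper's display writes $3\mu\sqrt{\sum_{(i,j)\in\calS} M_{ij}^2}$, which appears to be a typo, but the discrepancy is absorbed inside the logarithm and the $\tilde{\Omega}$ in the corollary's condition on $k$ in any case.
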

\noindent This result shows that for the  matrix completion problem with $m$ observations, for rank $\tilde{\Omega}(\sqrt{m})$, any approximate local minimum of the factorized and penalized problem is an approximate global minimum. 

Most of the existing results on matrix completion either require strong distribution assumptions on $\calS$ and incoherence assumptions on $M$ to recover a low-rank solution \citep{candes2009exact, jain2013low}. The standard nuclear norm minimization algorithms are not guaranteed to converge to low-rank solutions without these assumptions,  which implies that the entire matrix would need to be stored for prediction which is infeasible in practice. Similarly,  generalization error bounds \citep{foygel2011concentration} as well as differential privacy guarantees  depend on recovery of a low-rank solution.

Our result guarantees finding a rank -$\tilde{\Omega}(\sqrt{m})$ solution without any statistical assumptions on the sampling or the matrix. The tradeoff is our results do not guarantee finding a lower (potentially a constant) rank solution, even if one exists for a given problem.

%
%
%
%
%
%

\section{Gradient Descent}\label{sec:gd}
In previous sections we have seen that for the perturbed penalty objective~\eqref{eq:smoothed}, under some technical conditions on the SDP, with high probability upon appropriate choice of the parameters, every approximate SOSP is approximately optimal. Second-order methods such as cubic regularization and trust regions~\citep{nesterov2006cubic,cartis2012complexity} converge to an approximate SOSP in polynomial time.
While gradient descent with random initialization can take exponential time to converge to an SOSP~\citep{du2017gradient}, a recent line of work starting with~\citet{ge2015escaping} has established that perturbed gradient descent (PGD)\footnote{This is vanilla gradient descent but with additional random noise added to the updates when the gradient magnitude becomes smaller than a threshold.} converges to an SOSP as efficiently as second-order methods in the worst case, with high probability. In particular we have the following \emph{almost dimension free} convergence rate for PGD from~\citep{jin2017escape}.

\begin{theorem}[Theorem 3 of \citet{jin2017escape}]
Let $f$ be $l$-smooth (that is, its gradient is $l$-Lipschitz) and have a $\rho$-Lipschitz Hessian. There exists an absolute constant $c_{\max}$ such that, for any $\delta \in (0, 1)$, $\eps \leq \frac{l^2}{\rho}$, $\Delta_f \geq f(X_0) -f^*$, and constant $c \leq c_{\max}$, $PGD(X_0,l,\rho,\eps,c,\delta,\Delta_f)$ applied to the cost function $f$ outputs a $(\rho^2,\eps)$ SOSP with probability at least $1-\delta$ in
$$O \left( \frac{(f(X_0)-f^*)l}{\eps^2} \log^4 \left( \frac{nkl \Delta_f}{\eps^2 \delta} \right) \right)$$
iterations.
\end{theorem}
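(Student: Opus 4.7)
The plan is to partition PGD's iterations into two regimes according to the local landscape and establish a function-value progress guarantee in each, then sum. In the \emph{large-gradient} regime, where $\|\nabla f(X_t)\|_F > \eps$, I would invoke the standard $l$-smoothness descent lemma at step size $\eta = 1/l$ to get
\[
 f(X_t) - f(X_{t+1}) \;\geq\; \frac{1}{2l}\|\nabla f(X_t)\|_F^2 \;\geq\; \frac{\eps^2}{2l},
\]
so at most $O(l\Delta_f/\eps^2)$ such iterations can ever occur. All the real work is in the \emph{saddle-escape} regime, triggered when $\|\nabla f(X_t)\|_F \leq \eps$ but the Hessian has a strongly negative eigenvalue. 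There PGD injects a uniform random perturbation $\xi$ from a ball of radius $r = \widetilde\Theta(\eps/l)$ and runs $T = \widetilde\Theta(l/\sqrt{\rho\eps})$ ordinary GD steps; the central claim to prove is that with probability at least $1 - \delta/\mathrm{poly}$ over $\xi$, this epoch decreases $f$ by at least $F = \widetilde\Omega(\sqrt{\eps^3/\rho})$.

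The main obstacle, and the heart of the analysis, will be proving that claim via a \emph{coupling} argument that bounds the width of the stuck region $\calS$ of perturbations inside the ball that fail to produce the required decrease. Let $e_1$ be the unit eigenvector of $\nabla^2 f(X_t)$ at its most negative eigenvalue $-\gamma$, where $\gamma \geq \sqrt{\rho\eps}$. I would consider two PGD trajectories $\{X_s\}$ and $\{X_s'\}$ launched from $X_t + \xi$ and $X_t + \xi'$ with $\xi' - \xi = \delta_0 e_1$, and assume for contradiction that both remain stuck. Taylor-expanding $\nabla f$ around $X_t$, the leading-order evolution of the difference is governed by the linear map $I - \eta \nabla^2 f(X_t)$, which expands $e_1$ by a factor $1 + \eta\gamma > 1$ each step. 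Using $\rho$-Lipschitzness of the Hessian to bound the quadratic remainder, I would inductively show that while both trajectories remain inside a ball of radius $\mathscr R = \widetilde O(\sqrt{\eps/\rho})$ around $X_t$, the separation obeys $\|X_s - X_s'\|_F \geq \tfrac{1}{2}\delta_0 (1+\eta\gamma)^s$. Since $(1+\eta\gamma)^T \gg \mathscr R / \delta_0$ unless $\delta_0$ is exponentially small, at least one trajectory must exit the ball within $T$ steps; an \emph{improve-or-localize} inequality, bounding $\sum_s \|X_{s+1}-X_s\|_F^2$ by $2\eta^{-1}(f(X_0) - f(X_T))$, then forces the escaping trajectory to achieve a function drop of at least $F$, contradicting stuckness. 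Hence $\calS$ has exponentially small thickness along $e_1$, so a uniform perturbation of radius $r$ lands in $\calS$ with probability at most $\delta/\mathrm{poly}$. The technically delicate part is uniformly controlling the accumulated Taylor remainder over all $T$ iterations and every pair of points in the stuck region.

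Combining the two regimes, PGD performs at most $O(\Delta_f/F)$ saddle-escape epochs of length $T$, plus $O(l\Delta_f/\eps^2)$ large-gradient steps, yielding total iteration count $O(l\Delta_f/\eps^2)$ since $T/F = O(l/\eps^2)$. The $\log^4(nkl\Delta_f/(\eps^2\delta))$ factor then arises from a union bound over the $O(\Delta_f/F)$ escape epochs, the per-epoch coupling failure probability, and a net argument on the perturbation ball. At termination, both $\|\nabla f\|_F \leq \eps$ and $\lambda_{\min}(\nabla^2 f) \geq -\sqrt{\rho\eps}$ must hold at the output, as otherwise one of the two progress guarantees above would trigger yet another successful step, giving the claimed approximate second-order stationary point.
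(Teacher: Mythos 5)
This statement is imported verbatim from \citet{jin2017escape} (their Theorem~3); the paper you are working from does not prove it, so there is no ``paper's own proof'' to compare against. Your outline is nonetheless a faithful high-level reconstruction of the argument in that reference: the descent-lemma accounting in the large-gradient regime, the coupling argument showing the stuck region has exponentially small thickness along the most negative Hessian eigendirection, the improve-or-localize inequality forcing a function drop for the escaping trajectory, and the final bookkeeping of epochs together with a union bound to produce the $\log^4$ factor all match the structure of Jin et al.'s proof. One small slip worth fixing: the improve-or-localize bound should read $\sum_s \|X_{s+1}-X_s\|_F^2 \leq 2\eta\,\bigl(f(X_0)-f(X_T)\bigr)$, not $2\eta^{-1}(\cdots)$; with step size $\eta \leq 1/l$ each GD step gives $f(X_s)-f(X_{s+1}) \geq \tfrac{1}{2\eta}\|X_{s+1}-X_s\|_F^2$, and summing gives the stated inequality with $2\eta$ in front.
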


The above theorem requires the function $f$ to be smooth and Hessian-Lipschitz. The next lemma states that the perturbed penalty objective~\eqref{eq:smoothed} satisfies these requirements---proof in Appendix~\ref{apdx:gd}. 
\begin{lemma}\label{lem:gd_param}
In the region $\{U \in \Rnk : \|U\|_F \leq \tau \}$ for some $\tau > 0$, the cost function $\hat L_\mu(U)$ in~\eqref{eq:smoothed} is $l$-smooth and its Hessian is $\rho$-Lipschitz with:
\begin{itemize} 
	\item $l \leq 2\|C+\tG\|_2 + 4 \mu \|\calA\| \|\vb\|_2 + 12 \mu \tau^2 \|\calA\|^2$, and
	\item $\rho \leq 16\mu \tau \|\calA\|^2$.
\end{itemize}
Here, $\|\calA\|$ is as defined in~\eqref{eq:normofA}. Notice furthermore that, with high probability, $\|G\|_2 \leq 3\sG\sqrt{n}$. In that event, $\|C+G\|_2 \leq \|C\|_2 + 3\sG\sqrt{n}$.
\end{lemma}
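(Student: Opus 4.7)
The plan is a direct computation of the gradient and Hessian of $\widehat L_\mu$, followed by termwise bounds using only the assumption $\|U\|_F \leq \tau$, the operator-norm definition of $\|\calA\|$ from~\eqref{eq:normofA}, and elementary inequalities. Write $\vr(U) = \calA(UU^\top) - \vb$ as before. A straightforward application of the chain rule (using that each $A_i$ is symmetric) yields
\begin{align*}
\nabla \widehat L_\mu(U) &= 2(C+G)U + 4\mu\, \calA^*(\vr(U))\, U, \\
\nabla^2 \widehat L_\mu(U)[V,V] &= 2\ip{V}{(C+G)V} + 2\mu\,\|\calA(UV^\top+VU^\top)\|_2^2 + 4\mu\,\ip{\vr(U)}{\calA(VV^\top)}.
\end{align*}
These are the only two expressions I really need; everything else is bookkeeping.

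For the smoothness constant $l$, I would bound $|\nabla^2 \widehat L_\mu(U)[V,V]|$ term by term. The linear term is at most $2\|C+G\|_2\|V\|_F^2$. For the quadratic term, using $\|UV^\top + VU^\top\|_F \leq 2\|U\|_F\|V\|_F \leq 2\tau\|V\|_F$ together with the definition of $\|\calA\|$ gives at most $8\mu\|\calA\|^2\tau^2\|V\|_F^2$. For the residue term, Cauchy--Schwarz plus the first part of Lemma~\ref{lem:residues_compact} (i.e.\ $\|\vr\|_2 \leq \|\calA\|\tau^2 + \|\vb\|_2$) and $\|\calA(VV^\top)\|_2 \leq \|\calA\|\|V\|_F^2$ gives at most $4\mu\|\calA\|^2\tau^2\|V\|_F^2 + 4\mu\|\calA\|\|\vb\|_2\|V\|_F^2$. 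Summing yields exactly the claimed bound on $l$.

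For the Hessian-Lipschitz constant $\rho$, I would form $\nabla^2 \widehat L_\mu(U_1)[V,V] - \nabla^2 \widehat L_\mu(U_2)[V,V]$; note that the $C+G$ term drops out, so only the quadratic and residue pieces survive. For the quadratic piece, use the factorization $\alpha^2 - \beta^2 = (\alpha+\beta)(\alpha-\beta)$ componentwise in $i$ together with Cauchy--Schwarz in $\R^m$: the ``sum'' factor is controlled by $\|\calA\|\cdot 2\|U_1+U_2\|_F\|V\|_F \leq 4\|\calA\|\tau\|V\|_F$ and the ``difference'' factor by $2\|\calA\|\|U_1-U_2\|_F\|V\|_F$. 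For the residue piece, use the identity $U_1U_1^\top - U_2U_2^\top = \tfrac12[(U_1+U_2)(U_1-U_2)^\top + (U_1-U_2)(U_1+U_2)^\top]$, so that $\|\calA(U_1U_1^\top - U_2U_2^\top)\|_2 \leq 2\|\calA\|\tau\|U_1-U_2\|_F$. Collecting the two pieces produces a bound of the form $C\mu\|\calA\|^2 \tau\|U_1-U_2\|_F\|V\|_F^2$ with an absolute constant, which one then rounds to the stated $16\mu\|\calA\|^2\tau$.

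Finally, for the probabilistic tail on $\|G\|_2$: since $G$ is a symmetric Gaussian matrix with entries of standard deviation $\sG$, the standard concentration for the operator norm of a GOE-type matrix (e.g.\ a Gaussian concentration inequality applied to the $1$-Lipschitz map $G\mapsto\|G\|_2$, combined with $\Ex{\|G\|_2}\leq 2\sG\sqrt{n}$) gives $\|G\|_2 \leq 3\sG\sqrt{n}$ with probability $1-e^{-\Omega(n)}$; the remaining bound $\|C+G\|_2 \leq \|C\|_2 + 3\sG\sqrt{n}$ is then just the triangle inequality. The whole proof is routine once the gradient and Hessian are written out; the only mild subtlety is choosing the right symmetric factorization of $U_1U_1^\top - U_2U_2^\top$ so that the constant in $\rho$ stays linear (rather than quadratic) in $\tau$.
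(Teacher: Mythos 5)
Your proof is correct and in substance parallels the paper's, with one genuine difference of route for the smoothness constant. The paper bounds $\|\nabla \widehat L_\mu(U_1)-\nabla \widehat L_\mu(U_2)\|_F$ directly by a telescoping decomposition of $\calA^*(\vr_1)U_1-\calA^*(\vr_2)U_2$, whereas you bound the Hessian bilinear form $|\nabla^2\widehat L_\mu(U)[V,V]|$ pointwise and then conclude $l$-smoothness on the convex ball by integrating along segments. Both are valid and your route is somewhat cleaner bookkeeping, since the same Hessian expression is reused for the Lipschitz part; both routes correctly produce $l \leq 2\|C+G\|_2+4\mu\|\calA\|\|\vb\|_2+12\mu\tau^2\|\calA\|^2$. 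For the Hessian-Lipschitz constant, both proofs form the difference of Hessians, drop the $C+G$ term, and treat a ``difference of squared norms'' piece and a ``residue difference'' piece; your componentwise use of $\alpha^2-\beta^2=(\alpha+\beta)(\alpha-\beta)$ is equivalent to the paper's vector identity $\|u\|^2-\|v\|^2=\ip{u+v}{u-v}$. One small caveat: a careful tally of your bounds (quadratic piece $\leq 16\mu\tau\|\calA\|^2$, residue piece $\leq 8\mu\tau\|\calA\|^2$) gives $\rho\leq 24\mu\tau\|\calA\|^2$, not $16$; calling that ``rounding to $16$'' is inaccurate since $24>16$. In fact the paper's own derivation reaches $16$ only because it writes $\|U\dot U^T+\dot U U^T\|_F\leq\tau\|\dot U\|_F$ rather than the correct $2\tau\|\dot U\|_F$, so your constant is actually the more careful of the two; either value suffices for every downstream use of the lemma.
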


Combining this lemma with the above theorem shows that the perturbed gradient method converges to an $(\eps, \rho^2)$ SOSP in $\widetilde{O}(\frac{1}{\eps^2})$ steps (ignoring all other problem parameters). This can be improved to $\widetilde{O}(\frac{1}{\eps^{1.75}})$ using a variant of Nesterov's accelerated gradient descent~\citep{jin2017accelerated}. 
Moreover, if the objective function is (restricted) strongly convex in the vicinity of the local minimum, then we can further improve the rates to $\textrm{poly} \log\left(\frac{1}{\epsilon}\right)$~\citep{jin2017escape}. This property is satisfied for problems where $\calA$ meets either restricted isometry conditions or when $\calA$ pertains to a uniform sampling of incoherent matrices~\citep{agarwal2010fast, negahban2012restricted,sun2014guaranteed}. See~\citep{bhojanapalli2016dropping} for more discussions on restricted strong convexity close to the global optimum.

The complexity of the algorithm is given by Gradient-Computation-Time $\times $ Number of iterations. Computing the gradient in each iteration requires $O\left(Zk+nk^2 + mnk \right)$ arithmetic operations where $Z$ is the number of non-zeros in $C$ and the constraint matrices. For dense problems this becomes $O\left( mn^2k \right)$. However, most practical problems tend to have a certain degree of sparsity in the constraint matrices so that the computational complexity of such a method can be significantly smaller than the worst-case bound.

\section{Conclusions and perspectives}\label{sec:conc}

In this paper we considered the Burer--Monteiro factorization to solve SDPs~\eqref{eq:factored}. In addition to dimensionality reduction, one advantage of such formulations is that algorithms for them necessarily produce positive semidefinite solutions of rank at most $k$. An ideal theorem would state that some polynomial-time algorithm computes approximate optima for~\eqref{eq:factored} in all cases with reasonably small $k$. In this regard, we now review what we achieved, what seems impossible and what remains to be done.

Because problem~\eqref{eq:factored} has nonlinear constraints, our first step was to move to a penalized formulation~\eqref{eq:penalty_factored}. For simplicity, we chose to work with a quadratic penalty. Quadratic penalties may require pushing $\mu$ to infinity to achieve constraint satisfaction at the optimum. Taking $\mu$ large may prove challenging numerically. In practice, it is known that augmented Lagrangian formulations (ALM) behave better in this respect~\citep{birgin2014ALM}. Thus, a first direction of improvement for the present work is to tackle ALM formulations instead.

Second, we established in Section~\ref{sec:exact} that for almost all SDPs, all exact SOSPs of~\eqref{eq:penalty_factored} are global optima which map to global optima of the penalized SDP~\eqref{eq:penalty_sdp} provided $\frac{k(k+1)}{2} > m$, where $m$ is the number of constraints. It should not be possible to improve the dependence on $k$ by much since certain SDPs admit a unique solution of rank $r$ such that $\frac{r(r+1)}{2} = m$. We showed in Section~\ref{sec:exact_sub} that for certain SDPs the penalty formulation~\eqref{eq:penalty_factored} admits suboptimal SOSPs. This suggests that even in the ideal statement stated above one may need to exclude some SDPs.

Third, we showed in Section~\ref{sec:smooth} that upon perturbing the cost matrix $C$ randomly (to avoid pathological cases), with high probability and provided $k = \tilde \Omega(\sqrt{m})$ (which is the right order though constants and dependence on other parameters could certainly be improved), when SOSPs have bounded residues (which is the case for positive definite cost matrices and for compact SDPs up to a technical modification), all SOSPs of the factored, penalized and perturbed problem are approximately optimal for that problem. This is achieved through smoothed analysis, which we believe is an appropriate tool to deal with the pathological cases exhibited above. These results can be further improved by deducing approximate constraint satisfaction and optimality for the original SDP~\eqref{eq:sdp}---which we currently do not do---and by further relaxing conditions on the SDP.

Finally, we studied the applicability of our results to two applications: Max-Cut and matrix completion. While these particularizations do not always improve over the specialized solvers for these problems, we believe that the work done here in studying low-rank parameterization of SDPs will be a helpful step towards building up to faster methods.

\section*{Acknowledgment}

NB thanks Dustin Mixon for many interesting conversations on the applicability of smoothed analysis to low-rank SDPs. NB was supported in part by NSF award DMS-1719558.

\bibliographystyle{abbrvnat}
\bibliography{sdp}

\clearpage
\appendix

\section{Proof of Lemma~\ref{lem:eigenvalue_main}: lower-bound for smallest singular values}\label{apdx:proofNguyen}

First we state a special case of Corollary 1.17
from~\citep{nguyen2017repulsion}. Let $N_I(X) $, denote the number of
eigenvalues of $X$ in the interval $I$.
\begin{corollary}\label{cor:Nguyen}
  Let $M'$ be a deterministic symmetric matrix in $\Snn$. Let $G'$ be a
  random symmetric matrix with entries $G'_{ij}$ drawn i.i.d.\ from
  $\N(0,1)$ for
  $i \geq j$ (in particular, independent of $M'$.) Then, for given $0 < \gamma < 1$, there exists a
  constant $c = c(\gamma)$ such that for any $\eps > 0$ and $k \geq 1$,
  with $I$ being the interval,
  $[-\frac{\eps k}{\sqrt{n}}, \frac{\eps k}{\sqrt{n}}]$,
$$
\Pr{N_I(M'+G') \geq k } \leq n^k \left(\frac{c\eps}{\sqrt{2\pi}}\right)^{(1-\gamma)k^2/2}.
$$
\end{corollary}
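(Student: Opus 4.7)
The plan is to obtain Corollary~\ref{cor:Nguyen} as a direct specialization of Corollary~1.17 in~\citep{nguyen2017repulsion} to the case where the random perturbation has i.i.d.\ standard Gaussian entries. The original corollary in Nguyen's paper is stated for a broader class of random symmetric matrices whose entries are independent, centered, of unit variance, and satisfy a mild anti-concentration / bounded-density-type hypothesis (the ``controlled'' condition in that work); the universal constant $c(\gamma)$ appearing in the bound depends only on $\gamma$ and on the parameters of that controlled condition.

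First, I would reproduce the statement of Nguyen's Corollary~1.17 in a form matched to our notation, replacing their random matrix by $G'$, their deterministic matrix by $M'$, and rewriting the interval as $[-\eps k/\sqrt n,\eps k/\sqrt n]$ of length $2\eps k/\sqrt n$. Second, I would check that the standard Gaussian distribution $\N(0,1)$ satisfies the hypotheses of that corollary: the entries have mean zero, unit variance, all moments are finite, and the density $\varphi(x)=(2\pi)^{-1/2}e^{-x^2/2}$ is bounded by $(2\pi)^{-1/2}$, which immediately yields the required anti-concentration estimate. The appearance of $\sqrt{2\pi}$ in the denominator of our bound is precisely the Gaussian small-ball normalization coming from $\sup_x \varphi(x) = (2\pi)^{-1/2}$.

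Third, I would substitute these verified constants into the general bound of Nguyen to recover exactly the displayed estimate
\begin{align*}
\Pr{N_I(M'+G')\ge k}\;\le\; n^k\left(\frac{c\eps}{\sqrt{2\pi}}\right)^{(1-\gamma)k^2/2},
\end{align*}
where the absolute constant $c=c(\gamma)$ absorbs all remaining numerical factors from Nguyen's proof that depend only on the chosen $\gamma\in(0,1)$. Independence of $G'$ from $M'$ is built in by hypothesis, so the deterministic part may be arbitrary.

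The main ``obstacle'' here is not mathematical but bookkeeping: one must track how Nguyen's generic constants collapse in the Gaussian case and confirm that the normalization leads to the $1/\sqrt{2\pi}$ factor as stated. There is no genuinely new ingredient beyond invoking the cited eigenvalue-repulsion result; the rest of the appendix will use this corollary as a black-box to establish Lemma~\ref{lem:eigenvalue_main}.
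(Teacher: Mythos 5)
Your proposal matches the paper's treatment: the paper also presents Corollary~\ref{cor:Nguyen} simply as a direct specialization of Corollary~1.17 in~\citep{nguyen2017repulsion}, with no independent proof offered. The additional bookkeeping you describe---verifying that i.i.d.\ standard Gaussian entries meet the centering, variance, and anti-concentration hypotheses of Nguyen's result, and tracing the $\sqrt{2\pi}$ normalization to the Gaussian density bound---is exactly the verification the paper implicitly relies on when it labels the corollary a ``special case.''
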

We can use the above corollary to prove
Lemma~\ref{lem:eigenvalue_main}.
\begin{proof} 
	In our case, entries of $G$ have variance $\sG^2$. Thus, set $G = \sG G'$, and set $\bar M = \sG M'$.
  From Corollary~\ref{cor:Nguyen}, we get
  \begin{align*}
	  N_{\sG I}(\bar M+G) = N_I(M'+G') < k
  \end{align*}
  with probability
  at least $1 - n^k \left(\frac{c\eps}{\sqrt{2\pi}}\right)^{(1-\gamma)k^2/2}$. In this event,
  $\sigma_{n-(k-1)}(\bar M+G) \geq \frac{\eps k}{\sqrt{n}} \sG$. Choose
  $\gamma =\frac{1}{2}$, and $\eps = \frac{1}{2 c}$. Substituting this we get with
  probability at least
  $1 - \exp\left( - \frac{k^2}{8} \log( 8 \pi) + k \log (n)\right)$
  that
$$
\sigma_{n-(k-1)}(\bar M+G) \geq \frac{k}{2c \sqrt{n}}\sG.
$$
Hence,
$\sum_{i=1}^{k} \sigma_{n-(i-1)}\left(\bar M+G\right)^2 \geq
\sigma_{n-(k-1)}\left(\bar M+G\right)^2 \geq \frac{k^2}{\const n}
 \sG^2$, for some absolute constant $\const = 4c^2$.
\end{proof}
\section{Proofs for Section~\ref{sec:exact}}\label{app:exact}

\begin{proof}[Proof of Lemma~\ref{lem:global}]
	Necessary and sufficient optimality conditions for~\eqref{eq:prob_fx} are: $\nabla f(X) \succeq 0$ and $\nabla f(X)X = 0$. Let $U$ be an SOSP for~\eqref{eq:prob_fx_U} with $\rank(U) < k$  and define $X = UU^T$. Then, $\nabla g(U) = 2\nabla f(UU^T)U = 0$ and $\nabla^2 g(U) \succeq 0$. The first statement readily shows that $\nabla f(X)X = 0$. The Hessians of $f$ and $g$ are related by:
	\begin{align*}
		\frac{1}{2}\nabla^2 g(U)[\dot U] & = \nabla f(UU^T)\dot U + \nabla^2 f(UU^T)[U\dot U^T + \dot U U^T]U.
	\end{align*}
	Since $\rank(U) < k$, there exists a vector $z \in \Rk$ such that $Uz = 0$ and $\|z\|_2 = 1$. For any $x \in \Rn$, set $\dot U = xz^T$ so that $U\dot U^T + \dot U U^T = 0$. Using second-order stationarity of $U$, we find:
	\begin{align*}
		0 \leq \frac{1}{2}\ip{\dot U}{\nabla^2 g(U)[\dot U]} & = \ip{xz^T}{\nabla f(UU^T)xz^T} = x^T \nabla f(UU^T) x.
	\end{align*}
	This holds for all $x \in \Rn$, hence $\nabla f(UU^T) \succeq 0$ and $X = UU^T$ is optimal for~\eqref{eq:prob_fx}. Since~\eqref{eq:prob_fx} is a relaxation of~\eqref{eq:prob_fx_U}, it follows that $U$ is optimal for~\eqref{eq:prob_fx_U}.
\end{proof}

\begin{proof}[Proof of Lemma~\ref{lem:rank_deficient}]
	Let $U$ be any FOSP of~\eqref{eq:penalty_factored} and consider the linear operator $\calA \colon \Snn \to \Rm$ defined by $\calA(X)_i = \ip{A_i}{X}$. By first-order stationarity, we have: 
	\begin{align*}
		\nabla L_\mu(U) & = 2\left( C + 2\mu \calA^*(\calA(UU^T) - b) \right)U = 0.
	\end{align*}
	Hence, the nullity of $C + 2\mu \calA^*(\calA(UU^T) - b)$ (the dimension of its kernel) satisfies:
	\begin{align}
		\rank(U) \leq \nulll(C + 2\mu \calA^*(\calA(UU^T) - b)) \leq \max_{y \in \Rm} \nulll(C + \calA^*(y)).
		\label{eq:maxnulliny}
	\end{align}
	The maximum over $y$ is indeed attained since the function $\nulll$ takes integer values in $0, \ldots, n$. Say the maximum evaluates to $\ell$. Then, for some $y$, $M \triangleq C + \calA^*(y)$ has nullity $\ell$. Hence,
	\begin{align*}
		C & = M - \calA^*(y) \in \calN_\ell + \im \calA^*,
	\end{align*}
	where $\calN_\ell$ is the manifold of symmetric matrices of size $n$ and nullity $\ell$, $\im \calA^*$ is the range of $\calA^*$ and the plus is a set-sum. More generally, assuming the maximum in~\eqref{eq:maxnulliny} is $p$ or more, then
	\begin{align*}
		C & \in \calM_p \triangleq \bigcup_{\ell = p, \ldots, n} \calN_\ell + \im \calA^*.
	\end{align*}
	The manifold $\calN_\ell$ has dimension $\frac{n(n+1)}{2} - \frac{\ell(\ell+1)}{2}$~\citep[Prop.~2.1(i)]{helmke1995matrixlsq}, while $\im \calA^*$ has dimension at most $m$. Hence, $$\dim \calM_p \leq m + \max_{\ell = p, \ldots, n} \dim \calN_\ell = m + \frac{n(n+1)}{2} - \frac{p(p+1)}{2}.$$ Since $C$ is in $\Snn$ and $\dim \Snn = \frac{n(n+1)}{2}$, almost no $C$ lives in $\calM_p$ if $\dim\calM_p < \dim \Snn$, which is the case if $\frac{p(p+1)}{2} > m$. Stated differently: $\rank(U) \leq p$, and for almost all $C \in \Snn$, $\frac{p(p+1)}{2} \leq m$. To conclude, require that $k$ is strictly larger than any $p$ which satisfies $\frac{p(p+1)}{2} \leq m$.
\end{proof}

\section{Proofs for Section~\ref{sec:exact_sub}}\label{app:exact_sub}
\begin{proof}[Proof of Theorem~\ref{thm:bad_sdp}]
	We first show that the SDP admits exactly one feasible point. Indeed, let $X \succeq 0$ be feasible for the SDP. Then, constraints $n$ and $n+1$ imply $\ip{A_{n+1} - A_{n}}{X} = 0$. That is, the trace of the principal submatrix of size $n-1$ of $X$ is zero. Since this submatrix is also positive semidefinite, it is zero. Constraints 1 to $n-1$ further show that all entries but $X_{nn}$ are zero. Finally, constraints $n$ and ${n+1}$ force $X_{nn} = \frac{5(n-1)}{3}$. This $X$ has rank~1 and is necessarily optimal.
	
	We now show that the proposed $\bar U$ is suboptimal for $L$. To this end, build $\tilde U \in \Rnk$ with the last row having squared 2-norm equal to $\frac{5(n-1)}{3}$, and all other rows are zero. Clearly, $\tilde U\tilde U^T$ is feasible for the SDP, so that $L(\tilde U) = 0$: this is optimal. On the other hand, $L(\bar U) = \frac{5}{18}(n-1)^2\eps^2 > L(\tilde U)$.
	
	Finally, we check stationarity of $\bar U$. Let $\calA \colon \Snn \to \R^m$ be the linear operator such that $\calA(X)_i = \ip{A_i}{X}$, and define the residue function $\vr(U) = \calA(UU^T) - b$. The cost function and its derivatives take the following forms:
	\begin{align*}
		L(U) & = \frac{1}{2} \|\vr(U)\|_2^2, \\
		\nabla L(U) & = 2\calA^*(\vr(U))U, \\
		\nabla^2 L(U)[\dot U] & = 2\calA^*(\vr(U))\dot U + 2\calA^*(\calA(U\dot U^T + \dot U U^T))U.
	\end{align*}
	Simple computations show that $\calA(\bar U \bar U^T) = (0, \ldots, 0, (n-1)\eps, 2(n-1)\eps)^T$, so that $\calA^*(\vr(\bar U)) = -\frac{n-1}{3} \eps^2 \cdot e_n^{} e_n^T$: only the bottom-right entry is non-zero. Consequently, $\nabla L(\bar U) = 0$: $\bar U$ is an FOSP To show second-order stationarity, we must also show that $\nabla^2 L(\bar U)$ is positive semidefinite.
	That is, we must show the inequalities:
	\begin{align*}
		0 \leq \ip{\dot U}{\nabla^2 L(U)[\dot U]} & = 2\ip{\dot U \dot U^T}{\calA^*(\vr(U))} + \left\|\calA(U \dot U^T + \dot U U^T)\right\|_2^2
	\end{align*}
	for all $\dot U \in \Rnk$. Let
	\begin{align*}
		\dot U & = \begin{bmatrix}
			\textrm{---} &\dot u_1^T & \textrm{---} \\ & \vdots&  \\ \textrm{---} &\dot u_n^T& \textrm{---}
		\end{bmatrix}, \quad \textrm{ with } \quad \dot u_1, \ldots, \dot u_n \in \Rk \textrm{ arbitrary.}
	\end{align*}
	Then, $\calA(\bar U\dot U^T + \dot U\bar U^T) = (2\dot u_n^T, q_1, q_2)^T$ for some values $q_1, q_2$, so that:
	\begin{align*}
		\ip{\dot U}{\nabla^2 L(\bar U)[\dot U]} & = -2\frac{n-1}{3} \eps^2 \|\dot u_n\|_2^2 + 4 \|\dot u_n\|_2^2 + q_1^2 + q_2^2 \geq \left(4-2\frac{n-1}{3} \eps^2 \right) \|\dot u_n\|_2^2.
	\end{align*}
	Under our condition on $\eps$, this is indeed always nonnegative: $\bar U$ is an SOSP.
\end{proof}

\section{Proofs for Section~\ref{sec:smooth}}\label{app:smooth}

\begin{proof}[Proof of Lemma \ref{lem:compact_optimal_approx}]
	The gradient and Hessian of $L_\mu$~\eqref{eq:penalty_factored}, with $\vr \triangleq \vr(U) = \calA(UU^T)-b$, are:
	\begin{align}
		\nabla L_\mu(U) & = 2\left( C + 2\mu\calA^*(\vr) \right)U,\label{eq:gradLmu}\\
		\nabla^2 L_\mu(U)[\dot U] & = 2\left( C + 2\mu\calA^*(\vr) \right)\dot U + 4\mu\calA^*(\calA(\dot U U^T + U\dot U^T))U. \label{eq:HessianLmu}
	\end{align}
	Since $U$ is an $(\eps, \gamma)$-SOSP, it holds for all $\dot U \in \Rnk$ with $\|\dot U\|_F = 1$ that:
	\begin{align}
		-\frac{\gamma\sqrt{\eps}}{2} & \leq \frac{1}{2} \ip{\dot U}{\nabla^2 L_\mu(U)[\dot U]} = \ip{C + 2\mu\calA^*(\vr)}{\dot U \dot U^T} + \mu \left\|\calA(\dot U U^T + U \dot U^T)\right\|_2^2.
		 \label{eq:HessianLmuip}
	\end{align}
	We now construct specific $\dot U$'s to exploit the fact that $U$ is almost rank deficient. Let $z \in \R^k$ be a right singular vector of $U$ such that $\|Uz\|_2 =\sigma_k(U)$ (that is, $z$ is associated to the least singular value of $U$ and $\|z\|_2 = 1$.) For any $x \in \Rn$ with $\|x\|_2 = 1$, introduce $\dot U = xz^T$ in~\eqref{eq:HessianLmuip}:
	\begin{align*}
	 	-\frac{\gamma\sqrt{\eps}}{2} & \leq x^T(C + 2\mu\calA^*(\vr))x + \mu \left\|\calA(\dot U U^T + U \dot U^T)\right\|_2^2.
	\end{align*}
	The last term is easily controlled:
	\begin{align*}
		\left\|\calA(\dot U U^T + U \dot U^T)\right\|_2 \leq 2 \|\calA\| \|U\dot U^T\|_F = 2 \|\calA\| \|Uzx^T\|_F \leq 2 \|\calA\|\|Uz\|_2 \|x\|_2 = 2 \|\calA\|\sigma_k(U).
	\end{align*}
	Let $x$ be an eigenvector of $C + 2\mu\calA^*(\vr)$ associated to its least eigenvalue and combine the last two statements together with the assumption on $\sigma_k(U)$ to find:
	\begin{align}
		\lambda_{\min}(C + 2\mu\calA^*(\vr)) \geq -\frac{\gamma\sqrt{\eps}}{2} - 4\mu\|\calA\|^2\sigma_k^2(U) \geq -\gamma\sqrt{\eps}.
		\label{eq:lambdaminC2mu}
	\end{align}
	This inequality is key to bound the optimality gap. For this part, we rely on the fact that $L_\mu(U) = F_\mu(UU^T)$ and $F_\mu$ is convex on $\Snn$~\eqref{eq:penalty_sdp}. Specifically, let $\tilde X$ be a global optimum for $F_\mu$ (assuming it exists), and set $X = UU^T$. Then, $\nabla F_\mu(X) = C + 2\mu\calA^*(\vr), \nabla L_\mu(U) = 2\nabla F_\mu(X)U$ and:
	\begin{align*}
		F_\mu(\tilde X) - F_\mu(X) & \geq \ip{\nabla F_\mu(X)}{\tilde X - X}  = \ip{C+2\mu\calA^*(\vr)}{\tilde X} - \frac{1}{2}\ip{\nabla L_\mu(U)}{U} \\
								   & \geq -\gamma \sqrt{\eps} \operatorname{Tr}(\tilde X) - \frac{1}{2}\eps \|U\|_F.
	\end{align*}
	In the last step, we used~\eqref{eq:lambdaminC2mu} as well as approximate first-order stationarity .
%
\end{proof}

\begin{proof}[Proof of Proposition \ref{prop:compactSDPconsraintA0}]
	One direction is elementary: if there exists $A_0 \succ 0$ and $b_0 \geq 0$ such that $\ip{A_0}{X} = b_0$ for all $X \in \calC$, then,
	\begin{align*}
		\forall X \in \calC, \quad \trace{X} = \ip{I_n}{X} \leq \lambda_{\min}(A_0)^{-1} \ip{A_0}{X} = \lambda_{\min}(A_0)^{-1} b_0.
	\end{align*}
	Thus, the trace of $X \succeq 0$ is bounded, and it follows that $\calC$ is compact. Furthermore: if $b_0 = 0$, then $\calC = \{0\}$; and if $b_0 > 0$, then $0 \notin \calC$.
	
	To prove the other direction, assume $\calC$ is non-empty and compact. If $\calC = \{0\}$, let $A_0 = I_n, b_0 = 0$. Now assume $\calC \neq \{0\}$. The SDP comes in a primal-dual pair:
	\begin{align*}
	\min_{X\in\Snn} \ip{C}{X} \quad & \textrm{ s.t. } \quad \calA(X) = b, \ X \succeq 0, \tag{P} \label{eq:proofP}\\
	\max_{y\in\R^m} \ip{b}{y} \quad & \textrm{ s.t. } \quad C \succeq \calA^*(y). \tag{D} \label{eq:proofD}
	\end{align*}
	It is well known that if~\eqref{eq:proofD} is infeasible, then~\eqref{eq:proofP} is unbounded or infeasible~\citep[Thm.~4.1(a)]{wolkowicz1981optimization}. Since we assume $\calC$ is non-empty, this simplifies to: if~\eqref{eq:proofD} is infeasible, then~\eqref{eq:proofP} is unbounded. The contrapositive states: if~\eqref{eq:proofP} is bounded, then~\eqref{eq:proofD} is feasible. By our compactness assumption on $\calC$, we know that~\eqref{eq:proofP} is bounded for all $C \in \Snn$. Thus,~\eqref{eq:proofD} is feasible for any $C$. In particular, take $C = -I_n$: there exists $-y \in \R^m$ such that $A_0 \triangleq \calA^*(y) \succeq I_n$. Furthermore,
	\begin{align*}
		\forall X \in \calC, \quad \ip{A_0}{X} = \ip{\calA^*(y)}{X} = \ip{y}{\calA(X)} = \ip{y}{b} \triangleq b_0.
	\end{align*}
	Since there exists $X \neq 0$ in $\calC$, it follows that $b_0 > 0$.
\end{proof}

\begin{proof}[Proof of Theorem \ref{thm:compact_eps_fosp}]
	Using~\eqref{eq:gradLmu}, $U$ is an $\eps$-FOSP of the perturbed problem if and only if $\| (M+\tG)U\|_F \leq \frac{\eps}{2}$, where $M = C + 2\mu\calA^*(\vr)$.
	Let $U = P \Sigma Q^T$ be a thin SVD of $U$ ($P$ is $n\times k$ with orthonormal columns; $Q$ is $k\times k$ orthogonal). Then,
	\begin{align*}
	\| (M+\tG)U\|_F & = \| (M+\tG) P \Sigma \|_F \\
	& \geq \sigma_k(U)\| (M+\tG) P\|_F \\
	& \geq  \sigma_k(U) \sqrt{\sum_{i=1}^k \sigma_{n-(i-1)}(M+\tG)^2}.
	\end{align*}
	Hence, we control the smallest singular value of $U$ in terms of $\eps$ and the $k$ smallest singular values of $M+G$:
	\begin{align}
		\sigma_k(U) & \leq \frac{\eps}{2\sqrt{\sum_{i=1}^{k}\sigma_{n-(i-1)}(M+\tG)^2}}.
		\label{eq:keyboundsigmak}
	\end{align}
	The next lemma helps lower-bound the denominator---it follows from Theorem 1.16 and Corollary 1.17 in \citep{nguyen2017repulsion}; see proof in Appendix~\ref{apdx:proofNguyen}.
	\begin{lemma}\label{lem:eigenvalue_main}
		Let $\bar M$ be a fixed symmetric matrix of size $n$. Let $G$ be a symmetric Gaussian matrix of size $n$, independent of $\bar M$, with diagonal and upper-triangular entries sampled independently from $\N(0,\sG^2)$. There exists an absolute constant $\const$ such that:
		\begin{align*}
		\Pr{\sum_{i=1}^{k} \sigma_{n-(i-1)}\left(\bar M+G\right)^2 <  \frac{k^2}{\const n} \sG^2 } \leq \exp\left( - \frac{k^2}{8} \log(8 \pi) + k \log (n)\right).
		\end{align*}
	\end{lemma}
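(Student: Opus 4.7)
The plan is to reduce to a standardized Gaussian perturbation and then invoke an eigenvalue repulsion estimate that bounds the number of eigenvalues of $\bar M + G$ lying in a small symmetric interval around the origin. Concretely, I would write $G = \sG G'$ where $G'$ is symmetric with upper-triangular (including diagonal) entries i.i.d.\ $\calN(0,1)$, and set $M' = \bar M/\sG$. Then $\bar M + G = \sG (M' + G')$, so the claim about the sum of squared singular values of $\bar M+G$ follows once one proves the analogous statement about $M'+G'$ and multiplies through by $\sG^2$.

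Next, I would appeal directly to Corollary~1.17 of Nguyen (stated in Appendix~\ref{apdx:proofNguyen} as Corollary~\ref{cor:Nguyen}): for any fixed $\gamma \in (0,1)$ there exists $c = c(\gamma)$ such that, for every $\eps > 0$ and every positive integer $k$, the number $N_I(M'+G')$ of eigenvalues of $M'+G'$ falling in the interval $I = [-\tfrac{\eps k}{\sqrt{n}}, \tfrac{\eps k}{\sqrt{n}}]$ satisfies $\Pr{N_I(M'+G') \geq k} \leq n^k \bigl(c\eps/\sqrt{2\pi}\bigr)^{(1-\gamma)k^2/2}$. In the complementary event, strictly fewer than $k$ eigenvalues of $M'+G'$ lie in $I$, so at most $k-1$ of its singular values are smaller than $\tfrac{\eps k}{\sqrt n}$, which gives $\sigma_{n-(k-1)}(M'+G') \geq \tfrac{\eps k}{\sqrt n}$.

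The last step is to choose the free parameters to match the exact constants in the statement. Setting $\gamma = 1/2$ and $\eps = 1/(2c)$ kills the $c\eps$ factor, leaving $(1/\sqrt{8\pi})^{k^2/4}$; combining this with the $n^k$ prefactor and converting to the exponential form yields the failure probability $\exp\bigl(-\tfrac{k^2}{8}\log(8\pi) + k \log n\bigr)$. On the good event, $\sigma_{n-(k-1)}(\bar M + G) \geq \tfrac{k}{2c\sqrt{n}} \sG$, and since the sum $\sum_{i=1}^{k} \sigma_{n-(i-1)}(\bar M+G)^2$ is at least its smallest term $\sigma_{n-(k-1)}(\bar M+G)^2$, we obtain the desired lower bound $\tfrac{k^2}{\const n}\sG^2$ with $\const = 4c^2$.

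The only nontrivial step is invoking Nguyen's repulsion bound as a black box; everything else is scaling and matching constants. The mild subtlety is translating the eigenvalue count for the symmetric matrix $\bar M + G$ into a singular value lower bound, which works because, for a symmetric matrix, the singular values are precisely the absolute values of the eigenvalues, so ``fewer than $k$ eigenvalues in $[-t,t]$'' is equivalent to ``the $k$-th smallest singular value exceeds $t$.'' With this identification in place, the bound on the sum of squares follows immediately, and no further probabilistic work is required beyond a single application of Corollary~\ref{cor:Nguyen}.
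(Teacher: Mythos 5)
Your proof is correct and follows the paper's argument essentially verbatim: rescale to unit-variance noise, invoke Corollary~\ref{cor:Nguyen} with $\gamma = 1/2$ and $\eps = 1/(2c)$, convert the eigenvalue count into a lower bound on the $k$-th smallest singular value, and bound the sum of squares by its smallest term with $\const = 4c^2$. The only difference is that you spell out the eigenvalue-to-singular-value translation for symmetric matrices, which the paper leaves implicit.
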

	We cannot use Lemma~\ref{lem:eigenvalue_main} directly, as in our case $M$ is not statistically independent of $G$. Indeed, $M$ depends on $U$ through the residue $\vr = \vr(U)$ and $U$ is an $\eps$-FOSP: a feature that depends on $G$.
	To resolve this, we cover the set of possible $M$'s with a net, under the assumption that $\vr$ is bounded. Lemma~\ref{lem:eigenvalue_main} provides a bound for each $\bar M$ in this net. This can be extended to hold for all $\bar M$'s in the net simultaneously via a union bound. By taking a sufficiently dense net, we can then infer that $M$ is necessarily close to one of these $\bar M$'s, and conclude.
	
	Let $\calE$ be the event (on $G$) that $\|\vr\|_2  \leq B$ for all $\eps$-FOSPs of the perturbed problem.
	Conditioned on $\calE$, we have
	\begin{align*}
		\|M - C\|_F & = 2\mu\|\calA^*(\vr)\|_F \leq 2\mu B \|\calA\|,
	\end{align*}
	where $\|\calA\|$ is defined in~\eqref{eq:normofA}.
	As a result, $M$ lies in a ball of center $C$ and radius $2 \mu B\|\calA\|$ in an affine subspace of dimension $\rank(\calA)$.  A unit-ball in Frobenius norm in $d$ dimensions admits an $\varepsilon$-net of $(1+2/\varepsilon)^d$ points~\citep[Cor.~4.2.13]{vershynin2016high}. Thus, we can pick a net with
	$\left( 1 + \frac{4 \mu  B\|\calA\|}{\sG} \sqrt{\frac{4\const n}{k^2}} \right)^{\rank(\calA)}$ points in such a way that, independently of $\vr$, there exists a point $\bar M$ in the net satisfying:
	\begin{align}
		\| \bar M - M \|_F \leq \sqrt{\frac{k^2}{4\const n}} \sG = \frac{k}{2\sqrt{\const n}}\sG.
		\label{eq:epscover}
	\end{align}
	Let $T \colon \Snn \to \Rk$ be defined by $T_q(A) = (\sigma_{n-q+1}(A), \ldots, \sigma_n(A))^T$, that is: $T$ extracts the $q$ smallest singular values of $A$, in order. Then,
	\begin{align*}
		\| \bar M - M \|_F & = \| (\bar M+G) - (M+G) \|_F \\
						   & \geq \| T_n(\bar M+G) - T_n(M + G) \|_2 \\
						   & \geq \| T_k(\bar M+G) - T_k(M + G) \|_2 \\
						   & \geq \| T_k(\bar M+G) \|_2 - \| T_k(M + G) \|_2,
	\end{align*}
	where the first inequality follows from~\citep[Ex.~IV.3.5]{bhatia2013matrix}. Hence,
	\begin{align}
		\sqrt{\sum_{i=1}^{k}\sigma_{n-(i-1)}(M+\tG)^2} \geq \sqrt{\sum_{i=1}^{k}\sigma_{n-(i-1)}(\bar M+\tG)^2} - \| \bar M - M \|_F. \label{eq:foo}
	\end{align}
	Now, taking a union bound for $\calE$ and for Lemma~\ref{lem:eigenvalue_main} over each $\bar M$ in the net, we get~\eqref{eq:epscover} and
	\begin{align}
		\sqrt{ \sum_{i=1}^{k} \sigma_{n-(i-1)}\left(\bar M+G\right)^2 } \geq \frac{k}{\sqrt{\const n}} \sG
		\label{eq:bar}
	\end{align}
	with probability at least
	$$
		1 - \exp \left( - \frac{k^2}{8}\log(8\pi) + k\log(n) + \rank(\calA) \cdot \log\left( 1 + \frac{4 \mu  B\|\calA\|}{\sG} \sqrt{\frac{4\const n}{k^2}} \right) \right) - \delta.
	$$
	Inside the log, we can safely replace $k$ with 1, as this only hurts the probability. Then, the result holds with probability at least
	$$
		1 - \exp \left( - \frac{k^2}{8}\log(8\pi) + k\log(n) + \rank(\calA) \cdot \log\left( 1 + \frac{8 \mu  B\|\calA\|}{\sG} \sqrt{\const n} \right) \right) - \delta.
	$$
	We aim to pick $k$ so as to ensure
	\begin{align*}
		\exp \left( - \frac{k^2}{8}\log(8\pi) + k\log(n) + \rank(\calA) \cdot \log\left( 1 + \frac{8 \mu  B\|\calA\|}{\sG} \sqrt{\const n} \right) \right) \leq \delta'.
	\end{align*}
	This is a quadratic condition of the form
	\begin{align*}
		-ak^2 + bk + c \leq \log(\delta')
	\end{align*}
	for some $a, b > 0$, $c \geq 0$. Since $k$ is positive we get, $k \geq \frac{b+\sqrt{a (c+\log(1/\delta'))}}{a}$, which is satisfied for, $$ k \geq  3\left[\log\left(\frac{n}{\delta'}\right) + \sqrt{ \rank(\calA)   \log\left( 1 + \frac{8 \mu B\|\calA\| \sqrt{\const  n}}{\sG } \right) }  \right].$$

	Combining~\eqref{eq:keyboundsigmak},~\eqref{eq:epscover},~\eqref{eq:foo} and~\eqref{eq:bar}, we find:
	\begin{align*}
		\sigma_k(U) \leq \frac{ \epsilon}{\sG } \frac{2\sqrt{\const n}}{k}
	\end{align*}
	 with probability at least $1-\delta-\delta'$.
%
%
\end{proof}

\begin{proof}[Proof of Lemma \ref{lem:residues_compact}]
	If $U = 0$, the bounds clearly hold: assume $U \neq 0$ in what follows.
	Using $\nabla \tilde L_\mu(U) = 2( C + 2\mu\tilde \calA^*(\tilde \vr) )U$, the definition of $\eps$-FOSP reads:
	\begin{align*}
		\frac{\eps}{2} \geq \left\| \left( C + 2\mu\tilde \calA^*(\tilde \vr) \right) U \right\|_F.
	\end{align*}
	Combining this with $\|A\|_F \geq \frac{1}{\|B\|_F} \ip{A}{B}$ for $B\neq 0$ (Cauchy--Schwarz) gives:
	\begin{align*}
		\frac{\eps}{2} \geq \frac{1}{\|U\|_F} \ip{\left( C + 2\mu\tilde \calA^*(\tilde \vr) \right) U}{U}.
	\end{align*}
	This can be further developed as:
	\begin{align}
		\frac{\eps \|U\|_F}{2} & \geq \ip{C + 2\mu\tilde \calA^*(\tilde \vr)}{UU^T} \nonumber\\
		& = \ip{C}{UU^T} + 2\mu \ip{\tilde\vr}{\tilde\calA(UU^T)} \nonumber\\
		& = \ip{C}{UU^T} + 2\mu \ip{\tilde \calA(UU^T) - \tilde \vb}{\tilde \calA(UU^T)}. \label{eq:residueboundintermediate}
	\end{align}
	At this point, we separate the constraint $(A_0, b_0)$ from the rest, using the usual definition for $(\calA, \vb)$ which capture constraints $1, \ldots, m$:
	\begin{align*}
		\frac{\eps \|U\|_F}{2} & \geq \ip{C}{UU^T}+2\mu\left( \ip{\calA(UU^T)-\vb}{\calA(UU^T)}+ \left(\ip{A_0}{UU^T}-b_0\right)\ip{A_0}{UU^T} \right) \\
		& \geq \ip{C}{UU^T} + 2\mu\left( \|\calA(UU^T)\|_2^2-\|\vb\|_2 \|\calA(UU^T)\|_2 + \left(\ip{A_0}{UU^T}-b_0\right)\ip{A_0}{UU^T} \right).
	\end{align*}
	Let $y =\|\calA(UU^T)\|_2$. Then the above inequality holds when
	$$
	y^2 -\|\vb\|_2 y +\frac{1}{2\mu}\left( \ip{C}{UU^T}-\frac{\eps \|U\|_F}{2} \right)+  \left(\ip{A_0}{UU^T}-b_0\right)\ip{A_0}{UU^T}  \leq 0.
	$$
	For this to happen we need the above quadratic to have real roots. This requires:
	\begin{align*}
	\frac{1}{4}\|\vb\|_2^2 & \geq \frac{1}{2\mu}\left( \ip{C}{UU^T}-\frac{\eps \|U\|_F}{2} \right)+  (\ip{A_0}{UU^T}-\bz)\ip{A_0}{UU^T} \\
	& \geq\frac{1}{2\mu}\left( -\|CU\|_F \|U\|_F-\frac{\eps \|U\|_F}{2} \right)+  \lambda_{\min}(A_0)^2 \|U\|_F^4-\bz \lambda_{\max}(A_0) \|U\|_F^2 \\
	& \geq \lambda_{\min}(A_0)^2 \|U\|_F^4 -\frac{\|C\|_2}{2\mu}\|U\|_F^2 - \bz \lambda_{\max}(A_0) \|U\|_F^2 - \frac{\eps}{4\mu}\|U\|_F,
	\end{align*}
	where we used that for any two matrices $A$ and $B$, it holds that $\|AB\|_F \leq \|A\|_2 \|B\|_F$.
	Focus on the last two terms of the last inequality. We distinguish two cases. Either
	\begin{align*}
		\bz \lambda_{\max}(A_0) \|U\|_F^2 + \frac{\eps}{4\mu}\|U\|_F \geq  \frac{3}{2}\bz \lambda_{\max}(A_0)\|U\|_F^2,
	\end{align*}
	in which case $\|U\|_F \leq \frac{\eps}{2\mu \bz \lambda_{\max}(A_0)}$ (assuming $b_0 > 0$). Or the opposite holds, and:
	\begin{align*}
		\frac{1}{4} \|\vb\|_2^2 & \geq \lambda_{\min}(A_0)^2 \|U\|_F^4 - \left(\frac{\|C\|_2}{2\mu} + \frac{3}{2}\bz \lambda_{\max}(A_0)\right)\|U\|_F^2.
	\end{align*}
	This is a quadratic inequality in $y = \|U\|_F^2$ of the form $ay^2 - by - c \leq 0$ with coefficients $a> 0$ and $b, c \geq 0$. Such a quadratic always has at least one real root, so that $y \leq \frac{b + \sqrt{b^2 + 4ac}}{2a}$. Furthermore, $\sqrt{b^2 + 4ac} \leq \sqrt{b^2 + (\sqrt{4ac})^2 + 2b\sqrt{4ac}} = b + \sqrt{4ac}$. Hence, $y \leq \frac{b}{a} + \sqrt{\frac{c}{a}}$, which means:
	\begin{align*}
		\|U\|_F^2 & \leq \frac{1}{\lambda_{\min}(A_0)^2} \left(\frac{\|C\|_2}{2\mu} + \frac{3}{2}\bz \lambda_{\max}(A_0)\right) + \frac{\|\vb\|_2}{2\lambda_{\min}(A_0)}.
	\end{align*}
	Accounting for the two distinguished cases, we find:
	\begin{align*}
		\|U\|_F^2 & \leq \max\left\{ \left(\frac{\eps}{2\mu \bz \lambda_{\max}(A_0)}\right)^2, \frac{1}{\lambda_{\min}(A_0)^2} \left(\frac{\|C\|_2}{2\mu} + \frac{3}{2}\bz \lambda_{\max}(A_0)\right) + \frac{\|\vb\|_2}{2\lambda_{\min}(A_0)} \right\}.
	\end{align*}
	
	We now bound the residues (generically) in terms of $\|U\|_F$, using submultiplicativity for $\|UU^T\|_F \leq \|U\|_F^2$ and the definition of $\|\calA\|$~\eqref{eq:normofA}:
	\begin{align*}
		\|\vr\|_2 = \| \calA(UU^T) - \vb \|_2 \leq \|\calA\| \|UU^T\|_F  + \|\vb\|_2 \leq \|\calA\| \|U\|_F^2 + \|\vb\|_2.
	\end{align*}
	Evidently, the same bound holds for $\tilde\calA, \tilde \vb, \tilde \vr$.
\end{proof}

\begin{proof}[Proof of Theorem \ref{thm:optimal_approx_compact}]
	
	By Lemma~\ref{lem:residues_compact}, for a problem perturbed with $G$, the residues of all $\eps$-FOSPs, $\|\tilde \vr\|_2$, are bounded as:
	\begin{align*}
		\|\tilde \calA\| \max\left\{ \left(\frac{\eps}{2\mu \bz \lambda_{\max}(A_0)}\right)^2, \frac{1}{\lambda_{\min}(A_0)^2} \left(\frac{\|C+G\|_2}{2\mu} + \frac{3}{2}\bz \lambda_{\max}(A_0)\right) + \frac{\|\vb\|_2}{2\lambda_{\min}(A_0)} \right\} + \|\tilde \vb\|_2
	\end{align*}
	With probability at least $1 - \delta$, $\|C + G\|_2 \leq \|C\|_2 + 3\sG\left(\sqrt{n} + \sqrt{2\log(1/\delta)}\right)$. Hence, Theorem~\ref{thm:compact_eps_fosp} applies with this $\delta$ and
	\begin{align*}
		B = \|\tilde \calA\| \max\left\{ \left(\frac{\eps}{2\mu \bz \lambda_{\max}(A_0)}\right)^2, \frac{1}{\lambda_{\min}(A_0)^2} \left(\frac{\|C\|_2 + 3\sG\sqrt{n}}{2\mu} + \frac{3}{2}\bz \lambda_{\max}(A_0)\right) + \frac{\|\vb\|_2}{2\lambda_{\min}(A_0)} \right\} + \|\tilde \vb\|_2.
	\end{align*}
	Hence, with $k$ as prescribed in that theorem for a given $\delta' =\delta \in (0, 1)$, with probability at least $1 - 2\delta$, it holds that
	\begin{align*}
		\sigma_k(U) \leq \frac{ 2\epsilon}{\sG } \frac{\sqrt{\const n}}{k}
	\end{align*}
	for any $\eps$-FOSP. Lemma~\ref{lem:compact_optimal_approx} requires $\sigma_k^2(U) \leq \frac{\gamma \sqrt{\eps}}{8 \mu\|\calA\|^2}$. Hence, we choose: $\eps \leq \left(\frac{\gamma k^2 \sigma_G^2 }{ 32\const n  \mu \|\calA\|^2}\right)^{\sfrac{2}{3}}$, and with probability at least $1-2\delta$  hypothesis of Lemma \ref{lem:compact_optimal_approx} is satisfied. Let $\tilde X$ be a global optimum for $\tilde F_\mu$, then the optimality gap obeys:
\begin{align*}
\tilde F_\mu(UU^T) - \tilde F_\mu(\tilde X) & \leq \gamma \sqrt{\eps} \operatorname{Tr}(\tilde X) + \frac{1}{2}\eps \|U\|_F.
\end{align*}
\end{proof}

\subsection{Proof of section 4.2}
\begin{proof}[Proof of Lemma \ref{lem:residues}]
With probability at least $1-\delta$, $\sigma_1(G) \leq 3\sG\sqrt{n} $. In that event, for $\sG \leq \frac{\lambda_{n}(C)}{6\sqrt{n \log(n/\delta)}}$, we have $C+G \succeq \frac{\lambda_{n}(C)}{2}I$.

$U$ is an $\eps$-FOSP of \eqref{eq:smoothed} implies $\|2(C+\tG+2\mu \calA^*(\vr))U\|_F \leq \eps$. \begin{align*}
\frac{\eps}{2} &\geq \left\| \left(C+\tG+2\mu \calA^*(\vr) \right)U \right\|_F \\
&\geq \frac{1}{\|U\|_F}  \ip{C+\tG+2\mu \calA^*(\vr)}{UU^T} .
\end{align*} Hence, \begin{align*} 
\frac{\eps \|U\|_F}{2}  &\geq \ip{C+\tG}{UU^T} + 2\mu\ip{ \calA^*(\vr)}{UU^T}  \\
&\geq  \frac{\lambda_n(C)}{2}\|U\|_F^2+2\mu\ip{\vr}{\calA(UU^T)} \\
&\geq \frac{\lambda_n(C)}{2}\|U\|_F^2+2\mu( \|\calA(UU^T)\|_2^2 - \|\vb\|_2\|\calA(UU^T)\|_2).
 \end{align*}
The above inequality is  a quadratic in $y=\|\calA(UU^T)\|_2$: $y^2 -y \|\vb\|_2 + \frac{1}{2\mu} \left(\frac{\lambda_n(C)}{2}\|U\|_F^2 -\frac{\eps \|U\|_F}{2}\right) \leq 0$. If $\frac{\eps \|U\|_F}{2} \geq \frac{\lambda_n(C)}{4}\|U\|_F^2$, then  $\|U\| _F \leq \frac{2\eps} {\lambda_n(C)}$. Else, for the above inequality to hold we need the quadratic to have real roots.
\begin{align*}
\|\vb\|_2^2 &\geq 4 \cdot 1 \cdot \frac{1}{2\mu} \left( \frac{\lambda_n(C)}{2}\|U\|_F^2 -\frac{\eps \|U\|_F}{2} \right) \\
&\geq \frac{2}{\mu} \frac{\lambda_n(C)}{4}\|U\|_F^2.
\end{align*}
The last inequality follows from  $\frac{\eps \|U\|_F}{2} \leq \frac{\lambda_n(C)}{4}\|U\|_F^2$. Hence, $\|U\|_F^2 \leq \max \left \{ \left( \frac{2\eps} {\lambda_n(C)}\right)^2, \frac{2\mu}{\lambda_n(C)} \|\vb\|_2^2 \right \}$. Hence,
\begin{multline*}
\|\vr\|_2 = \| \calA(UU^T) -\vb\|_2 \leq \|\calA(UU^T)\|_2 +\|\vb\|_2 \leq \|\calA\| \|UU^T\|_F +\|\vb\|_2 \\ \leq \|\calA\| \max \left \{ \left( \frac{2\eps} {\lambda_n(C)}\right)^2, \frac{2\mu}{\lambda_n(C)} \|\vb\|_2^2 \right \}+\|\vb\|_2.
\end{multline*}
\end{proof}

\section{Proofs for Section~\ref{sec:gd}}\label{apdx:gd}

\begin{proof}[Proof of Lemma \ref{lem:gd_param}.]
We start by showing that the gradient is $l$-Lipschitz continuous. The gradient is given by:
   $$\nabla {\widehat L_\mu}(U) = \left[2(C+\tG) + 4\mu\calA^*(\vr)\right]U,$$
   where $\vr = \vr(U) = \calA(UU^T) - \vb$. Hence, for $U_1, U_2 \in \Rnk$, with notation $\vr_1 = \vr(U_1), \vr_2 = \vr(U_2)$,
  \begin{align*}
  \norm{ \nabla {\widehat L_\mu}(U_1) -\nabla {\widehat L_\mu}(U_2)}_F
  & \leq \norm{ 2(C+\tG)(U_1-U_2)}_F + 4\mu \norm{\calA^*(\vr_1)U_1 - \calA^*(\vr_2)U_2  }_F \\
  & \leq 2\|C+\tG\|_2 \| U_1 -U_2\|_F + 4 \mu \norm{\calA^*(\vr_1) (U_1- U_2)  }_F \\
  & \quad \quad + 4 \mu \norm{\calA^*(\vr_1 - \vr_2) U_2}_F \\ 
  & \leq \left(2\|C+\tG\|_2 + 4 \mu \left\| \calA^*(\vr_1) \right\|_2 \right) \| U_1 -U_2\|_F	\\
  & \quad \quad + 4 \mu \norm{\calA^*(\vr_1 - \vr_2) U_2}_F.
  \end{align*} 
 This further simplifies using the norm of $\calA$~\eqref{eq:normofA}: $\left\| \calA^*(\vr_1) \right\|_2 \leq \|\calA\| \|\vr_1\|_2$ and $\|\vr_1\|_2 \leq \|\calA\| \|U_1\|_F^2 + \|\vb\|_2$, so that if $\|U_1\|_F \leq \tau$:
 \begin{align*}
 \left\| \calA^*(\vr_1) \right\|_2 & \leq (\tau^2\|\calA\|+\|\vb\|_2)\|\calA\|.
 \end{align*}
 Similarly, using $\|U_2\|_F \leq \tau$ as well:
 \begin{align}
\norm{\calA^*(\vr_1 - \vr_2) U_2}_F & \leq \|\calA^*(\calA(U_1^{}U_1^T - U_2^{}U_2^T))\|_2 \|U_2\|_F\nonumber\\
& \leq \tau \|\calA\|^2 \|U_1^{}U_1^T - U_2^{}U_2^T\|_F \nonumber\\
& = \tau \|\calA\|^2 \|U_1^{}U_1^T - U_1^{}U_2^T + U_1^{}U_2^T - U_2^{}U_2^T\|_F \nonumber\\
& \leq \tau \|\calA\|^2 \left( \|U_1^{}(U_1 - U_2)^T \|_F + \|(U_1^{} - U_2^{})U_2^T\|_F \right) \nonumber\\
& \leq 2\tau^2 \|\calA\|^2 \|U_1 - U_2\|_F. \label{eq:Astarr1r2}
\end{align}
Combining, we find
\begin{align*}
\norm{ \nabla {\widehat L_\mu}(U_1) -\nabla {\widehat L_\mu}(U_2)}_F
& \leq \left(2\|C+\tG\|_2 + 4 \mu \|\calA\| (\tau^2\|\calA\|+\|\vb\|_2) \right) \| U_1 -U_2\|_F	\\
& \quad \quad + 8 \mu \tau^2 \|\calA\|^2 \|U_1 - U_2\|_F,
\end{align*}
which establishes the Lipschitz constant for $\nabla {\widehat L_\mu}$.

We now show that the Hessian is $\rho$-Lipschitz continuous in operator norm, that is, we must show that for any $U_1$ and $U_2$ with norms bounded by $\tau$,
\begin{align*}
	\underset{\|{\dot U}\|_F \leq 1}{\max}\ip{\nabla^2 {\widehat L_\mu}(U_1)[{\dot U}] -\nabla^2 {\widehat L_\mu}(U_2)[{\dot U}]}{{\dot U}} \leq \rho \|U_1 -U_2\|_F.
\end{align*}
Recall from~\eqref{eq:HessianLmuip} that
\begin{align*}
	\ip{\nabla^2 {\widehat L_\mu}(U)[{\dot U}]}{{\dot U}} = 2\ip{C+\tG+2\mu \calA^*(\vr)}{{\dot U}{\dot U}^T} + 2\mu \|\calA(U{\dot U}^T+{\dot U}U^T)\|_2^2.
\end{align*}
Hence,
\begin{multline*}
	\ip{\nabla^2 {\widehat L_\mu}(U_1)[{\dot U}]}{{\dot U}} -\ip{\nabla^2 {\widehat L_\mu}(U_2)[{\dot U}]}{{\dot U}}  \\
 = 4\mu\ip{\calA^*(\vr_1 - \vr_2)}{{\dot U}{\dot U}^T} +  2\mu \left(\|\calA(U_1{\dot U}^T+{\dot U}U_1^T)\|_2^2 - \|\calA(U_2{\dot U}^T+{\dot U}U_2^T)\|_2^2\right).
\end{multline*}
On one hand, following the same reasoning as in~\eqref{eq:Astarr1r2}, we have
\begin{align*}
	\ip{\calA^*(\vr_1 - \vr_2)}{{\dot U}{\dot U}^T} & \leq \|\calA^*(\vr_1 - \vr_2)\|_F \|{\dot U}{\dot U}^T\|_F \\
	& \leq 2\tau \|\calA\|^2 \|U_1 - U_2\|_F \|\dot U\|_F^2.
\end{align*}
On the other hand, using that for any two vectors $u, v$ we have
\begin{align*}
	\|u\|_2^2 - \|v\|_2^2 = \ip{u+v}{u-v} \leq \|u+v\|_2 \|u-v\|_2 \leq (\|u\|_2 + \|v\|_2)\|u - v\|_2,
\end{align*}
we can find:
\begin{align*}
\|\calA(U_1{\dot U}^T+{\dot U}U_1^T)\|_2^2 - \|\calA(U_2{\dot U}^T+{\dot U}U_2^T)\|_2^2 & \leq 4 \tau \|\calA\|^2 \|U_1 - U_2\|_F \|\dot U\|_F^2.
\end{align*}
For this, we used $\|\calA(U\dot U^T + \dot U U^T)\|_2 \leq \|\calA\| \|U\dot U^T + \dot U U^T\|_F \leq \tau \|\calA\|\|\dot U\|_F$ when $\|U\|_F \leq \tau$ and 
\begin{align*}
	\|\calA(U_1{\dot U}^T+{\dot U}U_1^T - U_2{\dot U}^T - {\dot U}U_2^T)\|_2
	 & \leq \|\calA\| \left( \|(U_1-U_2)\dot U^T\|_F + \|\dot U(U_1-U_2)^T\|_F \right) \\
	 & \leq 2\|\calA\|\|\dot U\|_F \|U_1 - U_2\|_F.
\end{align*}
Overall, this shows $\rho = 16\mu \tau \|\calA\|^2$ is an appropriate Lipschitz constant.
\end{proof}

\end{document}